\DeclareMathOperator*{\argmin}{arg\,min}
\newtheorem{fact}{Fact}
\newtheorem{assumption}{Assumption}
\newtheorem{corollary}{Corollary}
\newtheorem{lemma}{Lemma}
\newtheorem{definition}{Definition}
\newtheorem{proposition}{Proposition}
\newtheorem{theorem}{Theorem}
  \providecommand\BibTeX{{%
    \normalfont B\kern-0.5em{\scshape i\kern-0.25em b}\kern-0.8em\TeX}}}
\begin{document}
\title{Differentially-Private Multi-Tier Federated Learning}




\author{Evan Chen\IEEEauthorrefmark{1}\IEEEauthorrefmark{3}, Frank Po-Chen Lin\IEEEauthorrefmark{1}\IEEEauthorrefmark{3}, Dong-Jun Han\IEEEauthorrefmark{2}, and Christopher G. Brinton\IEEEauthorrefmark{1} \\
\IEEEauthorblockA{\IEEEauthorrefmark{1}School of Electrical and Computer Engineering, Purdue University, West Lafayette, IN, USA \\ 
\IEEEauthorrefmark{2}Department of Computer Science and Engineering, Yonsei University, Seoul, South Korea\\
Email: \IEEEauthorrefmark{1}\{chen4388, lin1183, cgb\}@purdue.edu, \IEEEauthorrefmark{2}djh@yonsei.ac.kr
}
\thanks{This work was supported by the . }
}

\maketitle
\def\thefootnote{\IEEEauthorrefmark{3}}\footnotetext{These authors contributed equally to this work.}\def\thefootnote{\arabic{footnote}}
\begin{abstract}
  While federated learning (FL) eliminates the transmission of raw data over a network, it is still vulnerable to privacy breaches from the communicated model parameters. In this work, we propose \underline{M}ulti-Tier \underline{F}ederated Learning with \underline{M}ulti-Tier \underline{D}ifferential \underline{P}rivacy ({\tt M$^2$FDP}), a DP-enhanced FL methodology for jointly optimizing privacy and performance in hierarchical networks. One of the key concepts of {\tt M$^2$FDP} is to extend the concept of HDP towards Multi-Tier Differential Privacy (MDP), while also adapting DP noise injection at different layers of an established FL hierarchy -- edge devices, edge servers, and cloud servers -- according to the trust models within particular subnetworks. We conduct a comprehensive analysis of the convergence behavior of {\tt M$^2$FDP}, revealing conditions on parameter tuning under which the training process converges sublinearly to a finite stationarity gap that depends on the network hierarchy, trust model, and target privacy level. 
  Subsequent numerical evaluations demonstrate that {\tt M$^2$FDP} obtains substantial improvements in these metrics over baselines for different privacy budgets, and validate the impact of different system configurations.
\end{abstract}     

\begin{IEEEkeywords}
Federated Learning, Edge Intelligence, Differential Privacy, Multi-Tier Networks
\end{IEEEkeywords}

 

\maketitle

\section{Introduction}

\noindent The concept of privacy has significantly evolved in the digital age, particularly with regards to data sharing and utilization in machine learning (ML)~\cite{Mohammad2019}. 
Federated learning (FL) has emerged as an attractive paradigm for distributing ML over networks, as it allows for model updates to occur locally on the edge devices~\cite{wang2019adaptive,lin2021timescale}.
Nonetheless, FL is also susceptible to privacy threats: it has been shown that adversaries with access to model updates can reverse engineer attributes of device-side data~\cite{Zhu2019,Wang2019TM}. This has motivated different threads of investigation on privacy preservation within the FL framework. One common approach has been the introduction of differential privacy (DP) mechanisms into FL~\cite{Shen2022imp,Zhao2021LDP,Shi2021HDP,chandrasekaran2022HDP}, where calibrated noise is injected into the data to protect individual-level information, creating a privacy-utility tradeoff for FL.

In this work, we are interested in examining and improving the privacy-utility tradeoff for DP infusion over \textit{Multi-tier FL} (MFL) systems, where multiple layers of fog network nodes (e.g., edge servers) separate edge devices from the cloud server, and conduct intermediate model aggregations~\cite{lin2021timescale,chandrasekaran2022HDP}. These nodes offer additional flexibility into where and how DP noise injection occurs, but challenge our understanding of how DP impacts performance in MFL. Motivated by this, we investigate the following question:
\begin{center}
\textbf{\textit{What is the coupled effect between MFL system configuration and DP noise injection on ML performance?}}
\end{center}

\textbf{Related Works}: The introduction of DP into FL has traditionally followed two paradigms: (i) central DP (CDP), involving noise addition at the main server~\cite{kon2017federated,Xiong2022CDP}, and (ii) local DP (LDP), which adds noise at each edge device~\cite{Zhao2021LDP,Shen2022imp,mobi2023Qiao,Liu2023mobi}. 
CDP generally leads to a more accurate final model, but it hinges on the trustworthiness of the main server. Conversely, LDP forgoes this trust requirement but requires a higher level of noise addition at each device to compensate~\cite{naseri2022local}.

More recently, a new paradigm called hierarchical DP (HDP) has been introduced~\cite{chandrasekaran2022HDP}. 
While focusing on a three-layer network scenario, HDP assumes that certain nodes present within the network can be trusted even if the main server cannot. These nodes are entrusted with the task of adding calibrated DP noise to the aggregated models. 
Instead of injecting noise uniformly, HDP enables tailoring noise addition to varying levels of trust within the system, highlighting an opportunity for privacy amplification at the ``super-node" level.
there remains a significant gap in studies that delve into Hierarchical DP (HDP) or explore the convergence behavior in these systems, particularly those featuring hierarchical structures. 


\textbf{Main contributions}: Despite these efforts, none have yet attempt to extend the concept of HDP towards more general multi-tier networks, nor any work have attempted to rigorously characterize or optimize a system that fuses the flexible trust model of HDP with MFL training procedures. In this work, we bridge this gap through the development of \textit{\underline{M}ulti-Tier \underline{F}ederated Learning with \underline{M}ulti-Tier \underline{D}ifferential \underline{P}rivacy} ({\tt M$^2$FDP}), along with its associated theoretical analysis. Our convergence analysis reveals conditions necessary to secure robust convergence rates in DP-enhanced Multi-Tier FL systems, which also leads to the discovery that the effect of DP-protection differs on different layer of the network hierarchy. Our main contributions can be summarized as follows:

\begin{itemize}[leftmargin=5mm]
    \item We formalize {\tt M$^2$FDP}, which integrates flexible Multi-Tier DP (MDP) trust models with MFL that consists of arbitrary number of network layers (Sec.~\ref{sec:tthf}). {\tt M$^2$FDP} is designed to preserve a target privacy level throughout the entire training process, instead of only at individual aggregations, allowing for a more effective balance between privacy preservation and model performance.
    \item We characterize the convergence behavior of {\tt M$^2$FDP} under non-convex ML loss functions (Sec.~\ref{sec:convAnalysis}). Our analysis shows that with an appropriate choice of step size,
    the cumulative average global model will converge sublinearly with rate $\mathcal O(1/\sqrt{T})$. Where the DP noise injected at different tiers impacts the stationarity gap differently. 
    
    \item Through numerical evaluations, we demonstrate that {\tt M$^2$FDP} obtains substantial improvements in convergence speed and trained model accuracy relative to existing DP-based FL algorithms (Sec.~\ref{sec:experiments}). 
    Our results also corroborate the impact of the network configuration and trust model on training performance in our bounds.
    \vspace{-0.01in}
\end{itemize}


\section{Preliminaries and System Model}
\noindent This section introduces key DP concepts (Sec.~\ref{ssec:DP}), our multi-tier network (Sec.~\ref{subsec:syst1}), and the target ML task (Sec.~\ref{subsec:syst2}).

\subsection{Differential Privacy (DP)}\label{ssec:DP}
Differential privacy (DP) characterizes a randomization technique according to parameters $\epsilon, \delta$. Formally, a randomized mechanism $\mathcal M$ adheres to ($\epsilon$,$\delta$)-DP if it satisfies the following: 
\begin{definition}[($\epsilon$,$\delta$)-DP~\cite{Dwork2014DP}]
For all $\mathcal D$, $\mathcal D'$ that are adjacent datasets, and for all $\mathcal S \subseteq \text{Range}(\mathcal M)$, it holds that:
\begin{align}
    \textstyle \Pr[\mathcal M(D)\in\mathcal S]\leq e^{\epsilon}\Pr[\mathcal M(D')\in\mathcal S]+\delta,
\end{align}
where $\epsilon>0$ and $\delta\in(0,1)$. 
\end{definition}
\noindent $\epsilon$ represents the privacy budget, smaller $\epsilon$ implies a stronger privacy guarantee. $\delta$ bounds the probability of the privacy mechanism being unable to preserve the $\epsilon$-privacy guarantee, and adjacent dataset means they differ in at most one element.    

\begin{figure*}[t]
  \centering
    \includegraphics[width=0.91\textwidth]{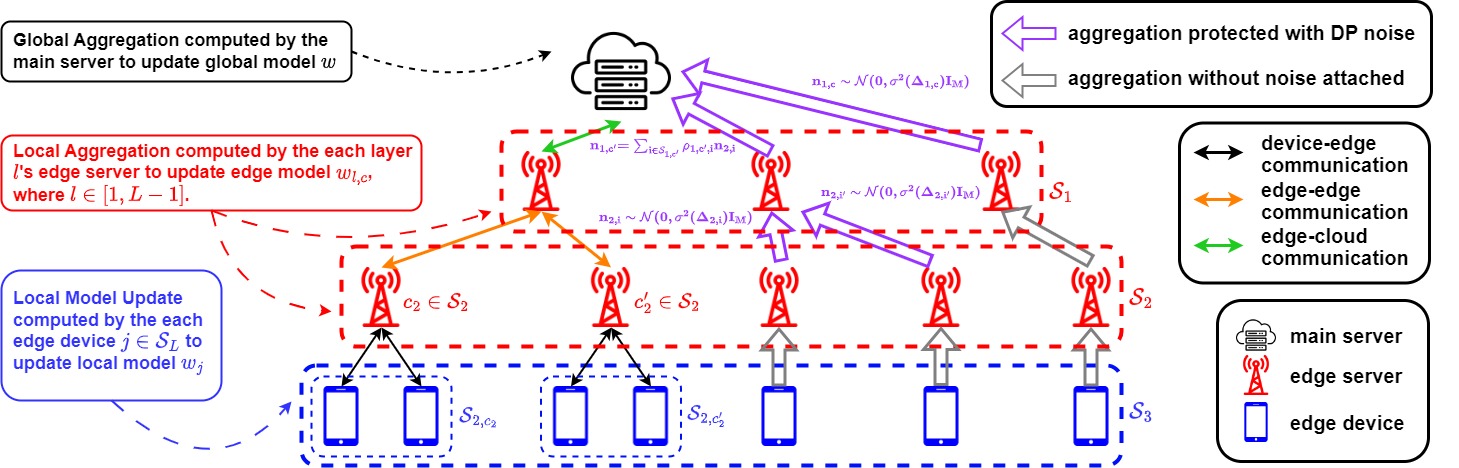}
     \caption{Multi-tier network architecture with total four layers ($L = 3$), where $\mathcal{S}_1$ and $\mathcal{S}_2$ are the set of edge servers between the devices and the main server, and $\mathcal{S}_3$ is the set of edge devices computing local gradient update. During local aggregation at each layer $l$, the noise added towards the child node's model will be a linear combination of existing noises if the child node is in the set of insecure edge servers $\mathcal{N}_{U,l+1}$. Otherwise, if the child node is in the set of secure edge servers $\mathcal{N}_{T,l+1}$, a new noise that guarantees differential privacy will be generated and injected to the child node's model.}
     \label{fig2}
     \vspace{-1.5em}
\end{figure*}
 

\textbf{Gaussian Mechanism}:
The Gaussian mechanism is a commonly employed randomization mechanism compatible with $(\epsilon,\delta)$-DP. With it, noise sampled from a Gaussian distribution is introduced to the output of the function being applied to the dataset. This function, in the case of {\tt M$^2$FDP}, is the computation of gradients. Formally, to maintain ($\epsilon$,$\delta$)-DP for any query function $f$ processed utilizing the Gaussian mechanism, the standard deviation $\sigma$ of the distribution must satisfy:
\begin{align}
    \textstyle \sigma > \frac{\sqrt{2log(1.25/\delta)} \Delta f}{\epsilon},
\end{align}
where $\Delta f$ is the $L_2$-sensitivity defined below.

\textbf{$L_2$-Sensitivity}:
The sensitivity of a function is a measure of how much the output can change between any given pair of adjacent dataset. Specifically, given that $\mathcal{D}$ and $\mathcal{D'}$ are adjacent datasets, the $L_2$-sensitivity for a function $f$ is defined as:
\begin{align}
    \textstyle\Delta f = \max_{\mathcal D, \mathcal D'} &\Vert f(\mathcal D)-f(\mathcal D')\Vert_2
\end{align}
where $\Vert\cdot\Vert_2$ is the $L_2$ norm.
In our setting, $L_2$ sensitivity allows calibrating the amount of Gaussian noise to be added to ensure the desired ($\epsilon$,$\delta$)-differential privacy in FL.

\subsection{Multi-Tier Network System Model}
\label{subsec:syst1} 

\textbf{System Architecture}: We consider the multi-tier network architecture depicted in Fig.~\ref{fig2}. We define $L$ as the number of layers below the cloud server, indicating that the total number of layers is $L+1$. The hierarchy, from bottom to top, consists of devices at the $L^\textrm{th}$ layer, $L-1$ layers of edge servers, and the cloud server. The primary responsibilities of these layers include local model training (edge devices), local aggregation (edge servers), and global aggregation (cloud server).

The set of nodes in each layer $l$ is denoted by $\mathcal{S}_l$, where $\mathcal{S}_{l,c}$ is the set of child nodes in layer $l+1$ connecting to a node $c $ in $\mathcal{S}_l$. The sets $\mathcal{S}_{l,c}$ are all pairwise disjoint, which means every node in layer $l+1$ are connected to a specific upper layer node in layer $l$. We define $N_l = |\mathcal{S}_l| = \sum_{c\in \mathcal{S}_{l-1}}|\mathcal{S}_{l-1,c}|$ as the total number of nodes in layer $l$. As a result, the total number of edge devices is $N_L =|\mathcal{S}_L|$.


\textbf{Threat Model}:
We consider the setting where any information that is being passed through a communication link has the risk of being capture and analyze, but there is no risk of any information being poisoned or replaced. This includes real world problems such as external eavesdroppers, or when certain edge servers exhibit \textit{semi-honest behavior}~\cite{Zhu2019,Wang2019TM}, where the edge servers still perform operations as the FL protocol intended, the exception being that they may seek to extract sensitive information from shared FL models.

Whenever aggregation between a node $c\in \mathcal{S}_l$ and its set of child nodes $\mathcal{S}_{l,c}$ is performed, all of the child nodes will consider whether they are willing to trust the node $c$ or not. If they trust it, then node $c$ is categorized as a (i) \textit{secure/trusted} edge server, $c \in \mathcal{N}_{T,l} \subseteq \mathcal{S}_l$, otherwise, it is being categorized as an (ii) \textit{insecure/untrusted} edge server $c\in \mathcal{N}_{U,l} \subseteq \mathcal{S}_l$. For the parent node's perspective, if any child node are labeled as insecure, $\mathcal{S}_{l,c} \cap \mathcal{N}_{U,l+1} \neq \phi$, then it will also label itself as insecure. Hence the set of insecure nodes in layer $l$ is a subset of the all child nodes of insecure nodes in layer $l-1$.

 


\subsection{Machine Learning Model} \label{subsec:syst2}
Each edge device $i\in \mathcal{S}_L$ has a dataset $\mathcal{D}_i$ comprised of $D_i=|\mathcal{D}_i|$ data points. We consider the datasets $\mathcal{D}_i$ to be non-i.i.d. across devices, as is standard in FL research.
The \textit{loss} $\ell(d; \mathbf w)$ quantifies the fit of the ML model to the learning task. It is linked to a data point $d \in \mathcal{D}_i$ and depends on the 
 model $\mathbf{w} \in \mathbb{R}^M$ (with $M$ representing the model's dimension). 
Consequently, the \textit{local loss function} for device $i$ is:
\begin{align}\label{eq:1}
\textstyle F_i(\mathbf w)=\frac{1}{D_i}\sum_{(\mathbf x,y)\in\mathcal D_i}
\ell(\mathbf x,y;\mathbf w).
\end{align}
We define the \textit{edge-level loss function} for each $ c\in \mathcal{S}_{l}$ as
\begin{align*}
\textstyle F_{l,c}(\mathbf w) = \begin{cases}
    \sum_{i\in \mathcal{S}_{l,c}} \rho_{l,c,i} F_i(\mathbf{w}), &\quad l = L-1,\\
    \sum_{j\in \mathcal{S}_{l,c}} \rho_{l,c,j} F_{l+1,j}(\mathbf{w}), &\quad l \in [1, L-2],
\end{cases}
\end{align*}
where $\rho_{l,c, i} = 1/|\mathcal{S}_{l,c}|$ symbolizes the relative weight when aggregating from layer $l+1$ to $l$.
Finally, the \textit{global loss function} is defined as the average loss across all subnets:
\begin{align*}
\textstyle F(\mathbf w)=\sum_{c=1}^{N_1} \rho_c F_{1,c}(\mathbf w),
\end{align*}
where $\rho_c = 1/N_1$ is each subnet's weight towards global loss.

The primary objective of ML model training is to pinpoint the optimal global model parameter vector $\mathbf w^* \in \mathbb{R}^M$ such that
$
\mathbf w^* = \mathop{\argmin_{\mathbf w \in \mathbb{R}^M} }F(\mathbf w).
$

\section{Proposed Methodology}
\label{sec:tthf}
\noindent In this section, we formalize {\tt M$^2$FDP}, including its operation timescales (Sec. \ref{subsec:syst3}), training process (Sec. \ref{subsec:form}), and DP mechanism (Sec. \ref{ssec:DP_main}).

\subsection{Model Training Timescales} \label{subsec:syst3}

Training in {\tt M$^2$FDP} follows a slotted-time representation. \textit{Global aggregation} is carried out by all layers of the hierarchy to collect model parameters to the main server at each time index $t = 0, 1, \cdots, T$. In between two global iterations $t$ and $t+1$, $K^t$ times of \textit{local model training iterations} are carried out by the edge devices via stochastic gradient descent (SGD).



In {\tt M$^2$FDP}, the main server first broadcasts the initial global model ${ w}^{(1)}$ to all devices at $t = 1$. For each local iteration $k$, we define $\mathcal{K}_l^t \subseteq \{1, 2, \ldots, K^t\}$ as the set of local iterations where \textit{local aggregation} from the devices toward edge servers at layer $l$ is performed. We assume the set of local aggregation iterations, $\{\mathcal{K}_1^t, \mathcal{K}_2^t, \ldots, \mathcal{K}_{L-1}^t\}$ to be pairwise disjoint, which means for each $k$, only one local aggregation will happen.
\begin{algorithm}[t]
{ \scriptsize
\caption{{\tt M$^2$FDP}: Multi-Tier Federated Learning with Multi-Tier Differential Privacy}
\label{alg:1}
\KwIn{Number of global aggregations $T$, minibatch sizes $\vert\xi_i^{(t,k)}\vert$, privacy level $\epsilon, \delta$} 
\KwOut{Global model ${ w}^{(T+1)}$}
Initialize $w^{(1)}$ and broadcast it among the edge devices through the edge servers, set $w_j^{(1,1)} = w^{(1)}, \forall j \in \mathcal{S}_L$.\\
Initialize $K^1$ to decide the number of local gradient descent to be performed before the first global aggregation.\\
Initialize $\eta^1$ according to the conditions on step size in Theorem~\ref{thm:noncvx}.

\For{$t = 1, \ldots, T$}{
\For{$k = 1, \ldots, K^t$}{
Each edge device $i \in \mathcal{S}_L$ performs a local SGD update based on \eqref{eq:SGD} using $w_i^{(t,k)}$ to obtain $\widetilde{w}_j^{(t,k)}$.\\
\uIf{$k \in \mathcal{K}^t_l, l\in [1, L-1]$}{
\For{$l' = L-1, L-2, \ldots, l$}{All edge devices/servers in layer $l'+1$ performs local aggregation towards edge servers $c$ at layer $l'$. \\
\uIf{$c$ is secure, $c\in \mathcal{N}_{T,l'}$}{
Perform aggregation based on \eqref{eq:secure_local_aggr}. }
\Else{Perform aggregation based on \eqref{eq:insecure_local_aggr}.}
}
After the local aggregation, edge servers at layer $l$ broadcast its aggregated model $w_{l, d_{l,j}}^{(t,k)}$  to edge devices $j$, $w_j^{(t,k)} = {w}_{l, d_{l,j}}^{(t,k)}, \quad j\in\mathcal{S}_L.$

}
\Else{
${{{w}}}_j^{(t,k)} = \widetilde{w}_j^{(t,k)}, \quad j\in\mathcal{S}_L.$
}

}
Performs global aggregation to compute the global model ${ w}^{(t+1)}$ on the main server based on \eqref{eq:secure_local_aggr}, \eqref{eq:insecure_local_aggr}, and \eqref{eq:glob_aggr_3}.\\
Broadcast the aggregated model to edge devices for the next round of local iterations $w_i^{(t+1,1)} = { w}^{(t+1)} \; \forall i \in \mathcal{S}_L$ .

}

}
\end{algorithm}
\setlength{\textfloatsep}{0pt}

\vspace{-2mm}
\subsection{{\tt M$^2$FDP} Training and Aggregations}\label{subsec:form}
Algorithm~\ref{alg:1} summarizes the full {\tt M$^2$FDP} procedure. The algorithm can be divided into three major stages: local model updated using on-device gradients, local model aggregation to edge servers on any itermediate layer $l \in [1, L-1]$, and global model aggregation to the main server.

\textbf{Local model update}: At local iterations $k \in [1, K^t]$, device $i$ randomly selects a mini-batch $\xi_i^{(t,k)}$ from its local dataset $\mathcal D_i$. Using this mini-batch, it calculates the stochastic gradient based on its preceding local model $ w_i^{(t,k)}$, 
${g}_{i}^{(t,k)}=\frac{1}{\vert\xi_i^{(t,k)}\vert}\sum_{(\mathbf x,y)\in\xi_i^{(t,k)}}
    \nabla\ell(\mathbf x,y; w_i^{(t,k)})$.
We assume a uniform selection probability $q$ of each data point, i.e., $q=\vert\xi_i^{(t)}\vert/D_i,~\forall i$. Device $i$ employs ${ g}_{i}^{(t,k)}$ to determine $ {\widetilde{{w}}}_i^{(t)}$:
\begin{align} \label{eq:SGD}
    \widetilde{w}_i^{(t,k+1)} = 
            w_i^{(t,k)}-\eta^t { g}_{i}^{(t,k)},~k \in [1,K^t].
\end{align}
Here, $\eta^{t} > 0$ signifies the step size. Using ${\widetilde{w}}_i^{(t,k+1)}$ as the base, the \textit{updated local model} ${w}_i^{(t,k+1)}$ is determined in one of several ways depending on the trust model, described next.

If no local aggregation is performed at time $k$, i.e., $k \notin \cup_{l=1}^{L-1}\mathcal{K}_l^t$, the updated model follows ${{{w}}}_i^{(t,k+1)} = {\widetilde{{w}}}_i^{(t,k+1)}$ in \eqref{eq:SGD}. On the other hand, if $k \in \cup_{l=1}^{L-1}\mathcal{K}_l^t$, then the updated local model inherits the local model aggregation described next.

\textbf{Local model aggregations}: When $k \in \mathcal{K}_l^t$, the network performs aggregation towards edge servers located at layer $l$. The \textit{local aggregated model} $w_{l',c}^{(t,k)}$ will first be computed at the all edge servers $c \in \mathcal{S}_{l'}$ at level $l' = L-1$, then sequentially compute towards layer $l$. Base on different types of edge servers, the following operations may be performed.

\textit{(i) Aggregation towards secure edge servers:} For any aggregation performed between layer $l'+1$ and layer $l'$, if the parent node $c\in \mathcal{S}_{l'}$ is a secure edge server, i.e., $c \in \mathcal{N}_{T,l'}$, each child node $i \in \mathcal{S}_{l',c}$ sends the server its model parameters without noise attached. Hence ${ w}_{l',c}^{(t,k)}$ is computed as follows:
\vspace{-0.05in}
\begin{align}\label{eq:secure_local_aggr}
\textstyle w_{l',c}^{(t,k)} &= \begin{cases}
    \textstyle\sum_{i\in \mathcal{S}_{l',c}}\rho_{l',c,i}\widetilde{w}_i^{(t,k)},\quad& l' = L-1,\\
    \textstyle\sum_{i\in \mathcal{S}_{l',c}}\rho_{l',c,i}{w}_{l'+1,i}^{(t,k)} \quad& l' \in [l, L-2].\\
\end{cases}
\end{align} 

\textit{(ii) Aggregation towards insecure edge servers:} Conversely, for any aggregation performed between layer $l'+1$ and layer $l'$, if the edge server $c \in \mathcal{S}_{l'}$ is considered untrustworthy, i.e., $c\in\mathcal{N}_{U,l'}$, each child node $i \in\mathcal{S}_{l',c}$ 
injects DP noise $n_{l'+1,i}^{(t,k)}$ within its transmission. 
This leads to the following adjustment from the previous aggregated model ${ w}_{l',c}^{(t,k)}$:

\vspace{-1mm}

{\small
\begin{align}\label{eq:insecure_local_aggr}
\textstyle \widetilde{w}_{l,i}^{(t,k)} &= \begin{cases}
    {w}_{l',i}^{(t,k)}, &\quad i \in \mathcal{N}_{U,l},\\
    {w}_{l',i}^{(t,k)} + n_{l',i}^{(t,k)}, &\quad i \in \mathcal{N}_{T,l},
\end{cases}\\
\textstyle w_{l',c}^{(t,k)} &= \begin{cases}
    \sum_{i\in \mathcal{S}_{l',c}}\rho_{l',c,i}\left(\widetilde{w}_i^{(t,k)} + n_{L,i}^{(t,k)}\right), \quad& l' = L-1,\\
    \sum_{i\in \mathcal{S}_{l',c}}\rho_{l',c,i}\widetilde{w}_{l'+1,i}^{(t,k)}, \quad& l' \in [l, L-2].\notag
\end{cases}
\end{align} 
}

For all edge devices $i \in \mathcal{S}_L$, the noise is sampled from:
\vspace{-1mm}
{\small
\begin{align}
    \textstyle n_{L,i}^{(t,k)} &\textstyle\sim \mathcal{N}(0, \sigma^2(\Delta_{L,i})I_M),\notag\\
    \textstyle\Delta_{L,i} &\textstyle\overset{\Delta}{=} \max_{\mathcal{D}, \mathcal{D}'}\left\|\eta^t\sum_{k=1}^{K^t}\left(g_i^{(t,k)}(\mathcal{D}) - g_i^{(t,k)}(\mathcal{D}')\right)\right\|.
\end{align}
}
\hspace{-3mm} where $\sigma(\cdot)$ is a function of $\Delta$ defined later in Proposition~\ref{prop:GM}.
For all edge servers $i \in \mathcal{N}_{T,l'}$, the noise is sampled from:

\vspace{-3mm}

{\small
\begin{align}
    \textstyle n_{l',i}^{(t,k)} &\textstyle \sim \mathcal{N}(0, \sigma^2(\Delta_{l',j})I_M)\notag\\
    \textstyle \Delta_{l',i} &\textstyle \overset{\Delta}{=} \max_{\mathcal{D}, \mathcal{D}'}\Big\|\eta^t\sum_{k=1}^{K^t}\sum_{c_{l'+1}\in \mathcal{S}_{l',i}}\rho_{l, i, c_{l'+1}}, \ldots\\
    &\textstyle \sum_{j\in \mathcal{S}_{L-1, c_{L-1}}} \rho_{L-1, c_{L-1}, j} \left(g_j^{(t,k)}(\mathcal{D}) - g_j^{(t,k)}(\mathcal{D}')\right)\Big\|.\notag
    \end{align}
}
    

Finally, after computing the local aggregated model, the edge server at layer $l$ broadcasts the final model across its subnet. We define $d_{l,j} \in \mathcal{S}_l$ to be the parent node of the edge device $j \in \mathcal{S}_L$ at layer $l$. Then the devices subsequently synchronize their local models as $ {{{w}}}_j^{(t,k)} = {w}_{l, d_{l,j}}^{(t,k)}, \forall j\in\mathcal{S}_L$.

\textbf{Global model aggregation}: 
At the end of each training interval $K^t$, a global aggregation occurs. The aggregation through edge servers follows the same procedure as \eqref{eq:secure_local_aggr}, \eqref{eq:insecure_local_aggr}.

For the aggregation from edge servers of layer $1$ towards the main server, all models will be protected. For secure edge servers $c \in \mathcal{N}_{T,1}$, noise $n_{1, c}^{(t,K^t+1)} \sim \mathcal{N}(0, \sigma^2(\Delta_{1,c})I_M)$ will be attached. For insecure edge servers $c \in \mathcal{N}_{U,1}$, no noise will be added since noises are already attached during aggregations within the subnet of edge server $c$.
\vspace{-3mm}

{\small
\begin{align}
\label{eq:glob_aggr_3}
    \widetilde{w}_{1,c}^{(t, K^t+1)} &= \textstyle\begin{cases}
        {w}_{1,c}^{(t, K^t+1)}, &\quad c \in \mathcal{N}_{U,1},\\
        {w}_{1,c}^{(t, K^t+1)} + n_{1,c}^{(t,K^t+1)}, &\quad c \in \mathcal{N}_{T,1},
    \end{cases}\notag\\
    w^{(t+1)} &= \textstyle\sum_{c = 1}^{N_1} \rho_{c} \widetilde{w}_{1,c}^{(t, K^t+1)}.
\end{align}
}
Now the resulting global model ${w}^{(t+1)}$ is employed to synchronize the local models maintained by the edge devices, i.e., ${{{w}}}_j^{(t+1,1)} = { w}^{(t+1)}, \forall j \in \mathcal{S}_L$.

\subsection{DP Mechanisms}\label{ssec:DP_main}
We now dictate the procedure for configuring the DP noise variables ${ n}_{l,c}^{(t,k)}$ for all layers $l \in [1, L]$. In this study, we focus on the Gaussian mechanisms from Sec.~\ref{ssec:DP}, though {\tt M$^2$FDP} can be adjusted to accommodate other DP mechanisms too. 
 
Following the composition rule of DP~\cite{Dwork2014DP}, we aim to take into account the privacy budget \textit{across all aggregations throughout the training}. This will ensure cumulative privacy for the complete model training process, rather than considering each individual aggregation in isolation~\cite{Shen2022imp}. Below, we define the Gaussian mechanisms, incorporating the moment accountant technique \cite{Shi2021HDP,Zhou2023HDL}.
These mechanisms utilize~\eqref{eq:grad_bound} from Assumption~\ref{assump:genLoss} which is stated in Sec.~\ref{sec:convAnalysis}.
\vspace{-1mm}
\begin{proposition}[Gaussian Mechanism~\cite{Abadi2019MA}]\label{prop:GM}
    Under Assumption~\ref{assump:genLoss}, there exists constants $c_1$ and $\alpha_l$ such that given the data sampling probability $q$ at each device, and the total number of aggregations $L$ conducted during the model training process, for any $\epsilon<c_1qL$,  {\tt M$^2$FDP} exhibits $(\epsilon,\delta)$-differential privacy for any $\delta>0$, so long as the DP noise follows $\mathbf n_{DP}^{(t,k)}\sim \mathcal{N}(0, \sigma^2(\Delta_{l,c})\mathbf{I}_M)$, where
    \vspace{-1mm}
    \begin{align}
        \textstyle\sigma(\Delta_{l,c}) = \alpha_l\frac{q\Delta_{l,c}\sqrt{L\log(1/\delta)}}{\epsilon}.
    \end{align}
    Here, $\Delta_{l,c}$ represent the $L_2$-norm sensitivity of the gradients exchanged during the aggregations towards a node $c \in \mathcal{S}_l$.
\end{proposition}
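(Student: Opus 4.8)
The plan is to invoke the moment accountant machinery of \cite{Abadi2019MA}, treating each of the $L$ aggregations as one invocation of a subsampled Gaussian mechanism of $L_2$-sensitivity $\Delta$. First I would fix the privacy-loss random variable: for neighboring datasets $\mathcal D,\mathcal D'$ and auxiliary input $\mathrm{aux}$, set $c(o)=\log\frac{\Pr[\mathcal M(\mathrm{aux},\mathcal D)=o]}{\Pr[\mathcal M(\mathrm{aux},\mathcal D')=o]}$ and define the $\lambda$-th log moment $\alpha_{\mathcal M}(\lambda)=\max_{\mathrm{aux},\mathcal D,\mathcal D'}\log\mathbb E_{o\sim\mathcal M(\mathrm{aux},\mathcal D)}[e^{\lambda c(o)}]$. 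Two structural facts drive the whole argument: composability, giving $\alpha(\lambda)\le\sum_{\ell=1}^{L}\alpha_{\mathcal M_\ell}(\lambda)$ for the adaptively composed mechanism, and the tail bound, which converts moments back to a DP guarantee by certifying $(\epsilon,\delta)$-DP with $\delta=\min_\lambda\exp(\alpha(\lambda)-\lambda\epsilon)$.

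Next I would bound a single step. Each aggregation forms a weighted sum of per-device stochastic gradients in which every data point is included independently with probability $q$; the gradient-norm control in Assumption~\ref{assump:genLoss} caps the per-record contribution at $L_2$-norm $\Delta$, so one aggregation is precisely a $q$-subsampled Gaussian mechanism with noise $\sigma_{DP}\mathbf I_M$ and noise multiplier $\sigma:=\sigma_{DP}/\Delta$. The key technical input---Lemma~3 of \cite{Abadi2019MA}---then furnishes constants for which, within an admissible range $\lambda\le c_1\sigma^2$, the single-step log moment obeys $\alpha_{\mathcal M_\ell}(\lambda)\le\frac{q^2\lambda(\lambda+1)}{(1-q)\sigma^2}$ up to lower-order terms. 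I expect this to be the crux: proving it requires expanding the moment generating function of the log-likelihood ratio between the mixture $(1-q)\mathcal N(0,\sigma^2)+q\,\mathcal N(1,\sigma^2)$ and $\mathcal N(0,\sigma^2)$ and controlling the resulting binomial series term by term, which is the genuinely delicate estimate while everything else is bookkeeping.

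Finally I would compose and invert. Composability over the $L$ aggregations gives $\alpha(\lambda)\le\frac{Lq^2\lambda(\lambda+1)}{(1-q)\sigma^2}\le\frac{c\,Lq^2\lambda^2}{\sigma^2}$ for a suitable constant $c$. Substituting into the tail bound and choosing $\lambda$ to minimize $\exp(\alpha(\lambda)-\lambda\epsilon)$ reduces the guarantee to two simultaneous requirements, $\frac{c\,Lq^2\lambda^2}{\sigma^2}\le\tfrac12\lambda\epsilon$ and $\tfrac12\lambda\epsilon\ge\log(1/\delta)$; solving them for the noise multiplier yields $\sigma\ge c_2\frac{q\sqrt{L\log(1/\delta)}}{\epsilon}$, and back-substituting $\sigma_{DP}=\sigma\Delta$ recovers the claimed $\sigma_{DP}=c_2\,q\Delta\sqrt{L\log(1/\delta)}/\epsilon$. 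The hypothesis $\epsilon<c_1qL$ is exactly what keeps the minimizing $\lambda$ inside the admissible range $\lambda\le c_1\sigma^2$ where the single-step estimate holds, closing the argument.
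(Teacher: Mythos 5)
The paper never proves this proposition: it is imported verbatim (with $T$ renamed to $L$) from the moments-accountant theorem of \cite{Abadi2019MA}, so the only thing to compare your proposal against is that reference's argument --- and your reconstruction of it is faithful. You set up the privacy-loss log moments, invoke composability and the tail bound, bound a single aggregation as a $q$-subsampled Gaussian of $L_2$-sensitivity $\Delta$ via the single-step moment estimate (Lemma~3 of \cite{Abadi2019MA}, enabled here by the gradient-norm bound in Assumption~\ref{assump:genLoss}), compose over the $L$ aggregations, and optimize over $\lambda$; the substitution $\sigma_{DP}=\sigma\Delta$ then correctly transfers the unit-sensitivity result to the accumulated-gradient queries whose sensitivities the paper computes in Lemma~\ref{lem:Delta}. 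This is exactly the intended derivation.

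One step in your closing paragraph does not hold as written. Solving your two requirements with $\sigma = c_2 q\sqrt{L\log(1/\delta)}/\epsilon$ forces the minimizing $\lambda \approx 2\log(1/\delta)/\epsilon$ to satisfy $\lambda \lesssim \sigma^2$, i.e. $2\log(1/\delta)/\epsilon \lesssim q^2 L \log(1/\delta)/\epsilon^2$, which is the condition $\epsilon \lesssim q^2 L$ --- precisely why \cite{Abadi2019MA} state their theorem with $\epsilon < c_1 q^2 T$. The hypothesis $\epsilon < c_1 q L$, as the proposition here states it and as you echo, is strictly weaker since $q\le 1$, and it does not by itself keep $\lambda$ inside the admissible range; your assertion that it ``closes the argument'' is therefore unjustified. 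This appears to be a transcription slip in the paper's own statement rather than an error original to you, but a complete proof must either assume $\epsilon < c_1 q^2 L$ or supply an argument for why the weaker hypothesis suffices (with unspecified constants, it does not).
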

The characteristics of the DP noises introduced during local and global aggregations is established using Proposition~\ref{prop:GM}. The relevant $L_2$-norm sensitivities can be established as follows:
\begin{lemma}\label{lem:DeltaM}
    Under Assumption~\ref{assump:genLoss}, the $L_2$-norm sensitivity of the exchanged gradients during local and global aggregations can be obtained as:

\begin{itemize}
    \item For any $l \in [1, L-1]$, and any given node $c_l \in \mathcal{S}_l$, the sensitivity is bounded as
    \begin{align}\label{eq:locAgg_egd_sens1}
        &\textstyle\Delta_{l, c_l} \leq \frac{2\eta^t K^t G}{\prod_{l'=l}^{L-1} s_{l'}}, \quad \forall l \in [1, L-1].
    \end{align}
    \item For any given node $j \in \mathcal{S}_L$, the sensitivity is bounded as
    \begin{align}\label{eq:locAgg_egd_sens2}
        &\textstyle\Delta_{L,j} \leq 2\eta^t K^t G.
    \end{align}
\end{itemize}
\end{lemma}
By defining $K^{\max} = \max_{t\in [1,T]}K^t$ as the largest local training interval throughout the whole training process, the variance of the Gaussian noises
${\sigma^2}(\Delta_{l,c})$ can be determined based on Proposition~\ref{prop:GM} and Lemma~\ref{lem:DeltaM} by setting $L=K^{\max}T$.

\section{Convergence Analysis} \label{sec:convAnalysis}
\subsection{Analysis Assumptions}
We first establish a few general and commonly employed assumptions that we will consider throughout our analysis.

\begin{assumption}[Characteristics of Noise in SGD~\cite{lin2021timescale,Shen2022imp,Zhang2022AS}] \label{assump:SGD_noise}
    Consider ${\mathbf n}_{i}^{(t)}=\widehat{\mathbf g}_{i}^{(t)}-\nabla F_i(\mathbf w_{i}^{(t)})$ as the noise of the gradient estimate through the SGD process for device $i$ at time $t$. The noise variance is upper bounded as $\mathbb{E}_t[\Vert{\mathbf n}_{i}^{(t)}\Vert^2]\leq \sigma^2~\forall i,t$.
\end{assumption}
\begin{assumption}[General Characteristics of Loss Functions \cite{Shen2022imp}, \cite{Zhang2022AS}]\label{assump:genLoss} 
Assumptions applied to loss functions include:
\begin{itemize}[leftmargin=5mm]
    \item \textbf{Bounded gradient}: The stochastic gradient norm of the loss function $\ell(\cdot)$ is bounded by a constant $G$, i.e.,  
    \begin{align} \label{eq:grad_bound}
            \textstyle\Vert\widehat{\mathbf g}_{i}^{(t)}\Vert \leq  G, ~\forall i, t.
     \end{align}
     \item  \textbf{Smoothness}: Each local loss $F_i$ is $\beta$-smooth $\forall i\in\mathcal{I}$, i.e., 
{
    \begin{equation}\label{eq:11_beta}
        \textstyle\Vert \textstyle\nabla F_i(\mathbf w_1)-\nabla F_i(\mathbf w_2)\Vert \leq \beta\Vert \mathbf w_1-\mathbf w_2 \Vert, ~\forall \mathbf w_1, \mathbf w_2 \in \mathbb{R}^M,
        \hspace{-1.mm}
    \end{equation} }
\end{itemize}
\end{assumption}

\subsection{Preliminary Quantities}
Before proceeding to our main result in Sec.~\ref{ssec:convAvg}, we establish a few quantities to facilitate our analysis. Since during aggregation towards an insecure server, whether the model comes from an secure or insecure determines the level of incoming noise, we define $p_{l,c}$ to be the ratio of child nodes for some $c \in \mathcal{N}_{U,l-1}$ that are secure servers
$\textstyle p_{l,c} \overset{\Delta}{=} \frac{|\mathcal{N}_{T, l} \cap \mathcal{S}_{l-1, c}|}{|\mathcal{S}_{l-1, c}|},  l \in [1, L-1]$.

We then define the maximum and minimum ratio on each layer $l \in [1, L-1]$ as $ p_l^{\max} = \max_{c \in \mathcal{N}_{U,l-1}} p_{l,c}$ and $ p_l^{\min} = \min_{c \in \mathcal{N}_{U,l-1}} p_{l,c}$. To simplify the representation of our result, we further define $p_0^{\max} = p_0^{\min} = 0$ and $p_L^{\max} = p_L^{\min} = 1$. Finally, we define $s_l = \min_{c\in \mathcal{S}_{l}} |S_{l,c}|$ to be the size of the smallest set of child nodes out of all edge servers $c \in \mathcal{S}_l$:

\subsection{General Convergence Behavior of {\tt M$^2$FDP}}
\label{ssec:convAvg}
We now present our main theoretical result, that the cumulative average of the global loss gradient can attain sublinear convergence to a controllable region around a stationary point.

\begin{theorem} \label{thm:noncvx}
        Under Assumptions~\ref{assump:SGD_noise} and \ref{assump:genLoss}, if $\eta^t=\frac{\gamma}{\sqrt{t+1}}$ with $\gamma\leq\min\{\frac{1}{K^{\max}},\frac{1}{T}\}/\beta$, the cumulative average of global gradient satisfies
{\small
\begin{align} 
        &\textstyle\frac{1}{T}\sum_{t=1}^T \left\|\nabla F(w^{(t)})\right\|^2 \leq \underbrace{\textstyle\frac{2\beta F(w^{(1)})}{\sqrt{T+1}} }_\textrm{(a$_1$)} + \underbrace{\textstyle\frac{K^{\max}\left(G^2\left(1 + \frac{1}{\beta}\right) + \sigma^2\right)}{T}}_\textrm{(a$_2$)}\label{eq:noncvx_rate}\\
        &\textstyle+ \underbrace{\textstyle\frac{8 L M (K^{\max})^4 q^2\log(1/\delta)}{\epsilon^2}\sum_{l=1}^{L}(1 - p_{l-1}^{\min})^2 \bigg(\mathcal{A}_l + \mathcal{B}_l + \mathcal{C}_l\bigg)}_\textrm{(b)},\notag
\end{align}
}
 where
{\small
\begin{align}
\label{eq:ABC_terms}
        \mathcal{A}_l &=  \textstyle p_{l}^{\max}\frac{\alpha_l^2}{\prod_{l' = l}^{L-1} s_{l'}^2}, \notag\\
        \mathcal{B}_l &= \textstyle\sum_{m = l}^{L-1} \prod_{l' = l}^{m-1}(1 - p_{l'}^{\min})p_{m}^{\max}\frac{\alpha_m^2}{\prod_{l' = l}^{ m-1} s_{l'}\prod_{l'' = m}^{ L-1} s_{l''}^2}, \notag\\
        \mathcal{C}_l &= \textstyle\prod_{l' = l}^{ L-1}(1 - p_{l'}^{\min})\frac{\alpha_L^2}{\prod_{l' = l}^{ L-1} s_{l'}}.
\end{align}
}
\end{theorem}

Noise injection creates a delicate balance between privacy conservation and model performance: as the number of global aggregations ({\small$T$}) increases, ($a_1$) and ($a_2$) in~\eqref{eq:noncvx_rate} decrease, while the overall noise level in (b) escalates. 
As suggested by Proposition~\ref{prop:GM} and Lemma~\ref{lem:DeltaM}, the variance of the DP noise inserted should scale with the total count of global aggregations, $T$. To counterbalance the influence of DP noise accumulation over successive aggregations, we enforce the condition $\eta^t\leq 1/T$ to scale down the DP noise by a factor of $T$. This strategy steers the bound in~\eqref{eq:noncvx_rate} towards the region denoted by (b), rather than allowing for constant amplification. At the same time, this highlights the trade-off between privacy preservation and model performance: although the condition $\eta^t\leq1/T$ serves to reduce DP noise, it simultaneously results in a smaller learning rate which slows {\tt M$^2$FDP} training.  

In addition, (b) conveys the beneficial influence of secure edge servers. Each term $\mathcal{A}_l$, $\mathcal{B}_l$, and $\mathcal{C}_l$ represents how the layer where noises are injected effects local aggregation towards layer $l$. $\mathcal{A}_l$ represents the part of the hierarchy where no noises were injected before aggregating towards servers at level $l$, The noise level introduced during aggregations is reduced by a factor of $1/\prod_{l' = l}^{L-1} s_{l'}^2$. This is contrast to the factor of $1/\prod_{l' = l}^{L-1} s_{l'}$ for term $\mathcal{C}_l$, which represents the part of the hierarchy where noises are injected when the edge servers in $\mathcal{S}_L$ aggregates towards edge servers at layer $L-1$. This noise reduction of an additional factor of $\prod_{l' = l}^{L-1} s_{l'}$ underscores the rationale behile {\tt M$^2$-FDP}'s integration of HDP with HFL, allowing for an effective reduction in the requisite DP noise for preserving a given privacy level. 

The term $\mathcal{B}_l$ represents the part of the hierarchy where the noises are injected some layer between the edge devices and the final local aggregation layer $l$. The layer $m-1$ in $\mathcal{B}_l$ is the layer where the noise is been injected. We can observe that the factor $1/\prod_{l' = l}^{ m-1} s_{l'}\prod_{l'' = m}^{ L-1} s_{l''}^2$ can be splitted into two parts: (i) The effect of all layers below $m$, have the same factor $\prod_{l'' = m}^{ L-1} s_{l''}^2 $ as in $\mathcal{A}_l$. (ii) The effect of all layers above $m$, have the same factor $\prod_{l' = l}^{ m-1} s_{l'}$ as in $\mathcal{C}_l$. Indicating the \textit{noises injected to the lower hierarchy layers, inflicted larger harm towards the accuracy. This shows a trade-off between protection over a HFL algorithm and the training accuracy.}

We introduce the following corollary to demonstrate this result in a more intuitve way.

\begin{corollary} \label{cor:noncvx}
    Under Assumptions~\ref{assump:SGD_noise} and \ref{assump:genLoss}, if $\eta^t=\frac{\gamma}{\sqrt{t+1}}$ with $\gamma\leq\min\{\frac{1}{K^{\max}},\frac{1}{T}\}/\beta$, and let $m$ be the layer where all edge servers below it are all secure servers ($m$ is the lowest layer to have insecure servers), i.e. $ p_{l'}^{\max} = p_{l'}^{\min} = 1, \forall l' \in [m+1 , L]$, then the cumulative average of global gradient satisfies
    
    \vspace{-3mm}
{\small
    \begin{align} \label{eq:cor_noncvx}
        &\textstyle\frac{1}{T}\sum_{t=1}^T \left\|\nabla F(w^{(t)})\right\|^2 \leq \textstyle\frac{2\beta F(w^{(1)})}{\sqrt{T+1}}+ \frac{K^{\max}\left(G^2\left(1 + \frac{1}{\beta}\right) + \sigma^2\right)}{T}\notag\\
        &\textstyle+ \frac{8 L M (K^{\max})^4 q^2\log(\frac{1}{\delta})}{\epsilon^2}\sum_{l=1}^{m}  \frac{(1 - p_{l-1}^{\min})^2p_l^{\max}\alpha_l^2}{\prod_{l'=l}^{L-1}s_{l'}^2} \\
        &\textstyle+ \underbrace{\textstyle\frac{8 L M (K^{\max})^4 q^2\log(\frac{1}{\delta})}{\epsilon^2}\sum_{l=1}^{m} \frac{(1 - p_{l-1}^{\min})^2(1 - p_l^{\min})(\alpha_l')^2}{\prod_{l'=l}^{m}s_{l'}\prod_{l''=m+1}^{L-1}s_{l''}^2},}_\textrm{(c)}\notag
\end{align}
}
\hspace{-3.5mm} where $a_l{'}$ is a constant that is a linear combination of $\alpha_{l+1}, \alpha_{l+2} \ldots, \alpha_L$.
\end{corollary}
 The magnitude of term (c) is determined by $m$. When $m$ is higher, which means noises are only injected at higher hierarchy edge servers, the effect of DP noise is decreases. When $m$ is lower, the effect of DP noise on accuracy increases. \textit{This shows that the priority of ensuring trustworthy on higher level hierarchy is different to lower level hierarchy.}

\section{Experimental Evaluation}
\label{sec:experiments}



\subsection{Simulation Setup}
\label{ssec:setup} 
By default, we consider a network with three layers ($L = 2$), and $N_2 = 50$ edge devices evenly distributed across $N_1 = 10$ subnets. We use two commonly employed datasets for image classification tasks: Fashion-MNIST (F-MNIST) and CIFAR-10.  
Following prior work~\cite{wang2019adaptive,lin2021timescale}, the training samples from each dataset are distributed across the edge devices in a non-i.i.d manner, in which each device exclusively contains datapoints from $3$ out of $10$ labels. 
For each dataset, we consider training a 12-layer convolutional neural network (CNN) accompanied by a softmax function and cross-entropy loss. The model dimension is set to $M=7840$. This evaluation setup provides insight into {\tt M$^2$FDP}'s performance when handling non-convex loss functions.
Also, unless otherwise stated, we assume $p_1 = 0.5$ for edge servers located of layer $l=1$, and that semi-honest entities are all governed by the same total privacy budget $\epsilon=1, \delta=10^{-5}$.

\begin{figure}[t]
\includegraphics[width=0.48\textwidth,height=0.15\textheight]{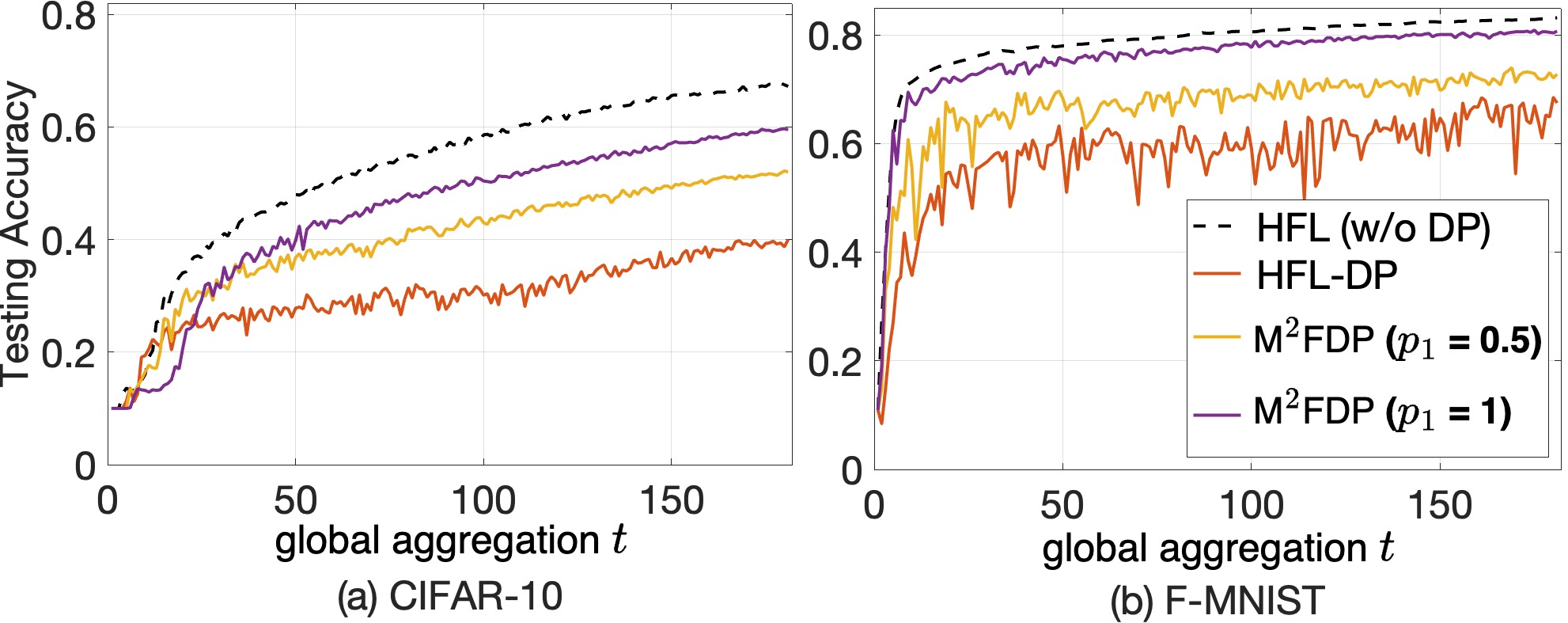}
\centering
\vspace{-0.05in}
\caption{Performance comparison between {\tt M$^2$FDP}, the {\tt HFL-DP} baseline from~\cite{Shi2021HDP}, and an upper bound established by hierarchical {\tt FedAvg} without DP. {\tt M$^2$FDP} significantly outperforms {\tt HFL-DP} and is able to leverage trusted edge servers effectively.}
\label{fig:mnist_poc_1_all}
\end{figure} 

\vspace{-0.2mm}
\subsection{{\tt M$^2$FDP} Comparison to Baselines}
\label{ssec:conv-eval}
Our first experiments examine the performance of {\tt M$^2$FDP} compared with baselines. We utilize the conventional hierarchical {\tt FedAvg} algorithm~\cite{liu2020client}, which offers no explicit privacy protection, as our upper bound on achievable accuracy (labeled {\tt HFL (w/o DP)}). We also implement {\tt HFL-DP}~\cite{Shi2021HDP}, which employs LDP within HFL, for competitive analysis. Further, we consider {\tt PEDPFL}~\cite{Shen2022imp}, for which we assume the edge devices all form a single subnet and apply LDP.


Fig.~\ref{fig:mnist_poc_1_all} demonstrates results comparing {\tt M$^2$FDP} without control to baselines. Each algorithm employs a local training interval of $K^t = 20$ and conduct local aggregations after every five local SGD iterations. We see {\tt M$^2$FDP} obtains performance enhancements over {\tt HFL-DP} by exploiting secure edge servers in the hierarchical architecture's middle-layer. This improvement is observed both in terms of superior accuracy and decreased accuracy perturbation as the ratio of secure edge server ($p_1$) increases. 
Notably, compared to the upper bound benchmark, {\tt M$^2$FDP} with $p_1=1$ achieves an accuracy within $8\%$ and $3\%$ of the benchmark for CIFAR-10 and F-MNIST, respectively.
The exploitation of secure edge servers into the middle-layers of the hierarchy significantly mitigates the amount of noise required to maintain an equivalent privacy level.

\subsection{Impact of System Parameters}

\subsubsection{Portion of Secure Edge Servers} \label{subsubsec:combiner}
We next consider the impact of the probability $p_1$ of a subnset having a secure edge server. Fig.~\ref{fig:eps_vs} shows that {\tt M$^2$FDP} obtains a considerable enhancement in privacy-performance tradeoff as the probability escalates. Specifically, under the same privacy conditions, {\tt M$^2$FDP} exhibits an improvement of at least $20\%$ for CIFAR-10 and $10\%$ for F-MNIST when all the edge servers in the mid-layer can be trusted (i.e., $p_1=1$) compared to {\tt HFL-DP} (i.e., $p_1 = 0$).

\begin{figure}[t]
\includegraphics[width=0.49\textwidth]{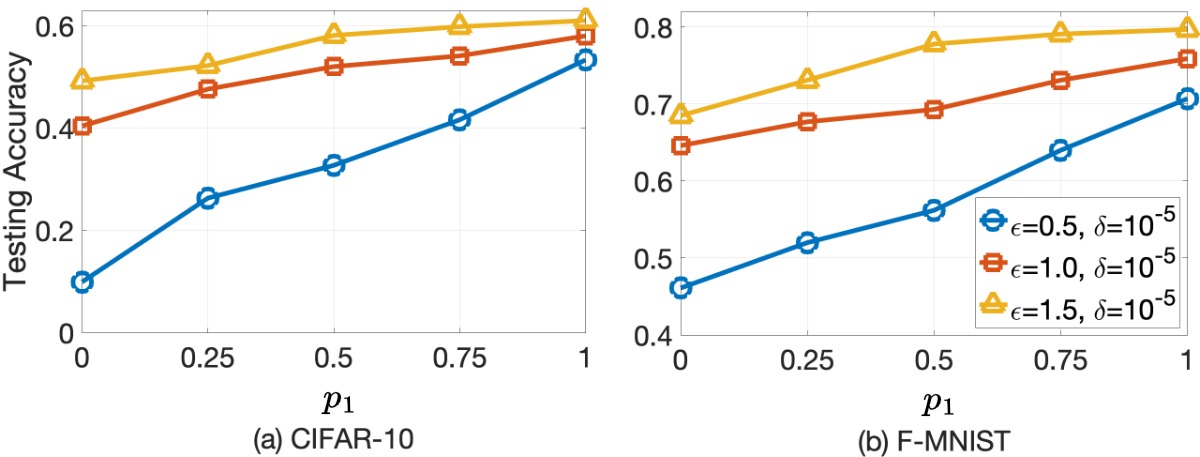}
\centering
\vspace{-0.2in}
\caption{Interplay between privacy and performance in {\tt M$^2$FDP} across various probabilities ($p_1$) of a subnet's linkage to a secure edge server under different privacy budgets ($\epsilon$).} 
\label{fig:eps_vs} 
\vspace{-4mm} 
\end{figure}

\subsubsection{Varying Network Configurations} \label{subsubsec:netSize}
Next, we investigate the impact of different network configurations on the performance of {\tt M$^2$FDP}. Two distinct configurations are evaluated: (i) Config. 1, wherein the size of subnets ($|\mathcal{S}_{1,c}| = s_1$) is kept at $5$ as the number of subnets ($N_1$) increases from $2$ to $10$; (ii) Config. 2, wherein $N_1$ is fixed at $2$, and $s_1$ increases from $5$ to $25$.
Fig.~\ref{fig:mnist_poc_2_all} gives the results. The positive correlation between network size and model performance is apparent in both configurations.
This can be attributed to the diminishing contribution of individual devices in aggregations.
This observation aligns well with Theorem~\ref{thm:noncvx}, which quantifies the noise reduction as the network size expands.

\begin{figure}[t]
\includegraphics[width=0.49\textwidth]{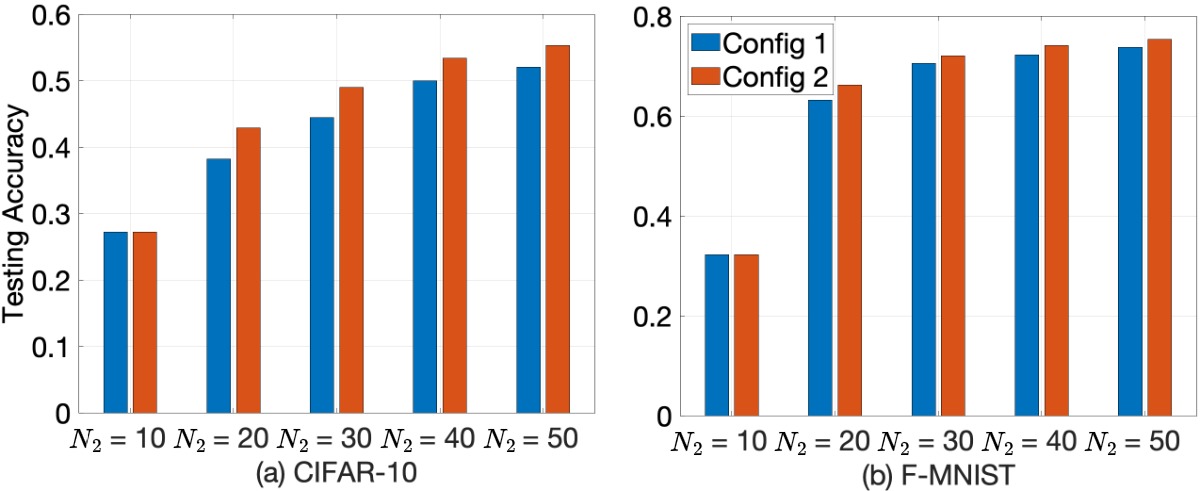} 
\centering
\vspace{-0.2in}
\caption{Impact of various network configurations on the performance of {\tt M$^2$FDP}. Under the same network size, enhancing the size of each subnet $s_1$ yields superior test accuracy compared to merely increasing the number of subnets $N_1$.}
\label{fig:mnist_poc_2_all}
\end{figure}
Also, Config. 2 leads to superior model performance compared to Config. 1: even while retaining the same network size, increasing subnet sizes is more beneficial than increasing the number of subnets. This pattern once again aligns with Theorem~\ref{thm:noncvx}: when subnets are linked to a secure edge server, the extra noise needed to maintain an equivalent privacy level can be downscaled by $1/s_1^2$.


\section{Conclusion}
\noindent In this study, we developed {\tt M$^2$FDP}, which integrates multi-tier differential privacy (MDP) into multi-tier federated learning (MFL) to enhance the trade-off between privacy and performance. We conducted a thorough theoretical analysis of {\tt M$^2$FDP}, identifying conditions under which the algorithm will converge sublinearly to a controllable region around a stationary point, and revealing the impact of different system factors on the privacy-utility trade-off. Numerical evaluations confirmed {\tt M$^2$FDP}'s superior training performance and improvements in resource efficiency compared to existing DP-infused FL/HFL algorithms.      

\bibliographystyle{IEEEtran}
\bibliography{sample-base}

\begin{thebibliography}{10}
\providecommand{\url}[1]{#1}
\csname url@samestyle\endcsname
\providecommand{\newblock}{\relax}
\providecommand{\bibinfo}[2]{#2}
\providecommand{\BIBentrySTDinterwordspacing}{\spaceskip=0pt\relax}
\providecommand{\BIBentryALTinterwordstretchfactor}{4}
\providecommand{\BIBentryALTinterwordspacing}{\spaceskip=\fontdimen2\font plus
\BIBentryALTinterwordstretchfactor\fontdimen3\font minus \fontdimen4\font\relax}
\providecommand{\BIBforeignlanguage}[2]{{%
\expandafter\ifx\csname l@#1\endcsname\relax
\typeout{** WARNING: IEEEtran.bst: No hyphenation pattern has been}%
\typeout{** loaded for the language `#1'. Using the pattern for}%
\typeout{** the default language instead.}%
\else
\language=\csname l@#1\endcsname
\fi
#2}}
\providecommand{\BIBdecl}{\relax}
\BIBdecl

\bibitem{Mohammad2019}
M.~Al-Rubaie and J.~M. Chang, ``Privacy-preserving machine learning: Threats and solutions,'' \emph{IEEE Security Privacy}, vol.~17, no.~2, pp. 49--58, 2019.

\bibitem{wang2019adaptive}
S.~Wang, T.~Tuor, T.~Salonidis, K.~K. Leung, C.~Makaya, T.~He, and K.~Chan, ``Adaptive federated learning in resource constrained edge computing systems,'' \emph{IEEE J. Select. Areas Commun.}, vol.~37, no.~6, pp. 1205--1221, 2019.

\bibitem{lin2021timescale}
F.~P.-C. Lin, S.~Hosseinalipour, S.~S. Azam, C.~G. Brinton, and N.~Michelusi, ``Semi-decentralized federated learning with cooperative {D2D} local model aggregations,'' \emph{IEEE J. Sel. Areas Commun.}, 2021.

\bibitem{Zhu2019}
L.~Zhu, Z.~Liu, and S.~Han, ``Deep leakage from gradients,'' in \emph{Proc. Adva. Neural Inf. Process. Syst.}, 2019.

\bibitem{Wang2019TM}
Z.~Wang, M.~Song, Z.~Zhang, Y.~Song, Q.~Wang, and H.~Qi, ``Beyond inferring class representatives: User-level privacy leakage from federated learning,'' in \emph{Proc. IEEE INFOCOM}, 2019, pp. 2512--2520.

\bibitem{Shen2022imp}
X.~Shen, Y.~Liu, and Z.~Zhang, ``Performance-enhanced federated learning with differential privacy for internet of things,'' \emph{IEEE Internet Things J.}, vol.~9, no.~23, pp. 24\,079--24\,094, 2022.

\bibitem{Zhao2021LDP}
Y.~Zhao, J.~Zhao, M.~Yang, T.~Wang, N.~Wang, L.~Lyu, D.~Niyato, and K.-Y. Lam, ``Local differential privacy-based federated learning for internet of things,'' \emph{IEEE Internet Things J.}, vol.~8, no.~11, pp. 8836--8853, 2021.

\bibitem{Shi2021HDP}
L.~Shi, J.~Shu, W.~Zhang, and Y.~Liu, ``Hfl-dp: Hierarchical federated learning with differential privacy,'' in \emph{Proc. IEEE GLOBECOM}, 2021, pp. 1--7.

\bibitem{chandrasekaran2022HDP}
V.~Chandrasekaran, S.~Banerjee, D.~Perino, and N.~Kourtellis, ``Hierarchical federated learning with privacy,'' \emph{arXiv:2206.05209}, 2022.

\bibitem{kon2017federated}
J.~Konečný, H.~B. McMahan, F.~X. Yu, P.~Richtárik, A.~T. Suresh, and D.~Bacon, ``Federated learning: Strategies for improving communication efficiency,'' \emph{arXiv:1610.05492}, 2017.

\bibitem{Xiong2022CDP}
Z.~Xiong, Z.~Cai, D.~Takabi, and W.~Li, ``Privacy threat and defense for federated learning with non-i.i.d. data in aiot,'' \emph{IEEE Trans. Ind. Informat.}, vol.~18, no.~2, pp. 1310--1321, 2022.

\bibitem{mobi2023Qiao}
J.~Qiao, S.~Shen, S.~Chen, X.~Zhang, T.~Lan, X.~Cheng, and D.~Yu, ``Communication resources limited decentralized learning with privacy guarantee through over-the-air computation,'' in \emph{Proceedings of the 24-th International Symposium on Theory, Algorithmic Foundations, and Protocol Design for Mobile Networks and Mobile Computing}, ser. MobiHoc '23.\hskip 1em plus 0.5em minus 0.4em\relax New York, NY, USA: Association for Computing Machinery, 2023, p. 201–210.

\bibitem{Liu2023mobi}
J.~Liu, L.~Zhang, X.~Yu, and X.-Y. Li, ``Differentially private distributed online convex optimization towards low regret and communication cost,'' ser. MobiHoc '23.\hskip 1em plus 0.5em minus 0.4em\relax New York, NY, USA: Association for Computing Machinery, 2023, p. 171–180.

\bibitem{naseri2022local}
M.~Naseri, J.~Hayes, and E.~D. Cristofaro, ``Local and central differential privacy for robustness and privacy in federated learning,'' \emph{arXiv:2009.03561}, 2022.

\bibitem{Dwork2014DP}
C.~Dwork and A.~Roth, ``The algorithmic foundations of differential privacy,'' vol.~9, no. 3-4, p. 211–407, 2014.

\bibitem{Zhou2023HDL}
T.~Zhou, ``Hierarchical federated learning with gaussian differential privacy,'' in \emph{Proc. Int. Conf. Advanced Inf. Sci. Syst.}, 2023.

\bibitem{Abadi2019MA}
M.~Abadi, A.~Chu, I.~Goodfellow, H.~B. McMahan, I.~Mironov, K.~Talwar, and L.~Zhang, ``Deep learning with differential privacy,'' in \emph{SIGSAC}, vol.~24.\hskip 1em plus 0.5em minus 0.4em\relax Association for Computing Machinery, 2016, p. 308–318.

\bibitem{Zhang2022AS}
X.~Zhang, X.~Chen, M.~Hong, S.~Wu, and J.~Yi, ``Understanding clipping for federated learning: Convergence and client-level differential privacy,'' in \emph{Proc. Machine Learn.}, vol. 162, 2022, pp. 26\,048--26\,067.

\bibitem{liu2020client}
L.~Liu, J.~Zhang, S.~Song, and K.~B. Letaief, ``Client-edge-cloud hierarchical federated learning,'' in \emph{Proc. IEEE ICC}, 2020, pp. 1--6.

\end{thebibliography}
 

\newpage

\begingroup
\onecolumn
\raggedbottom

\appendix
\small
\setcounter{lemma}{0}
\setcounter{theorem}{0}
\setcounter{assumption}{0}
\setcounter{corollary}{0}
\setcounter{section}{0}
\section*{Introduction to Notations and Preliminaries used in the Proofs}\label{app:notations}
\noindent  We first define several auxiliary notations that we will use in the proof:

We define the maximum $L2$-norm sensitivity from each layer's aggregation as:
\begin{equation}
    \Delta_{l} \overset{\Delta}{=} \max_{c_l \in \mathcal{S}_l} \Delta_{l, c_l}, \quad \forall l \in [1, L].
\end{equation}

We define the node $d_{l,j}$ to be the parent node of a leaf node $j \in \mathcal{S}_L$ at a specific layer $l$ (which is a unique node in a hierarchical structure).

\subsection{Proof of Theorem \ref{thm:noncvx} and Corollary \ref{cor:noncvx}}
\label{app:thm2}
\begin{theorem} \label{thm:noncvx} 
        Under Assumptions~\ref{assump:SGD_noise} and~\ref{assump:genLoss}, upon using {\tt DP-HFL} for ML model training, if $\eta^t=\frac{\gamma}{\sqrt{t+1}}$ with $\gamma\leq\min\{\frac{1}{K^{\max}},\frac{1}{T}\}/\beta$, the cumulative average of global loss gradients satisfies
\begin{align*} 
        &\frac{1}{T}\sum_{t=1}^T \left\|\nabla F(w^{(t)})\right\|^2 \leq \frac{2\beta}{\sqrt{T+1}} \mathbb{E}\left[ F(w^{(1)}) -  F(w^{(T+1)})\right] + \frac{K^{\max}\left(G^2\left(1 + \frac{1}{\beta}\right) + \sigma^2\right)}{T}\\
        &+ \frac{8 L M (K^{\max})^4 q^2\log(1/\delta)}{\epsilon^2}\sum_{l=1}^{L}(1 - p_{l-1}^{\min})^2 \bigg(p_{l}^{\max}\frac{\alpha_l^2}{\prod_{l' = l}^{L-1} s_{l'}^2} + \sum_{m = l}^{L-1} \prod_{l' = l}^{m-1}(1 - p_{l'}^{\min})p_{m}^{\max}\frac{\alpha_m^2}{\prod_{l' = l}^{ m-1} s_{l'}\prod_{l'' = m}^{ L-1} s_{l''}^2} \\&+ \prod_{l' = l}^{ L-1}(1 - p_{l'}^{\min})\frac{\alpha_L^2}{\prod_{l' = l}^{ L-1} s_{l'}}\bigg)
\end{align*}
\end{theorem}

\begin{proof}
    Consider $t \in [1, T]$, for a given aggregation interval $K^t$, the global average of the local models follows the following dynamics:
    \begin{align}
    \label{global_w_iter}
        w^{(t+1)} =& \sum_{c = 1}^{N_1} \rho_c \left(w_{1,c}^{(t,K^t+1)} + n_{1,c}^{(t,K^t+1)}\right) \nonumber \\
        =& w^{(t)} - \eta^t \sum_{k=1}^{K^t} \sum_{c_1 = 1}^{N_1} \rho_{c_1} \sum_{c_2\in \mathcal{S}_{1,c_1}} \rho_{1,c_1,c_2}\ldots\sum_{j\in\mathcal{S}_{L-1,c_{L-1}}} \rho_{L-1, c_{L-1},j} {g}_j^{(t,k)}\\
        &+ \underbrace{\sum_{c=1}^{N_1}\rho_c n_{1,c}^{(t,K^t+1)} + \sum_{l=1}^{L-1} \sum_{\substack{k=1,\\ K_l^t | k }}^{K^t} \left(\sum_{c_1 = 1}^{N_1} \rho_{c_1} \sum_{c_2\in \mathcal{S}_{1,c_1}} \rho_{1,c_1,c_2}\ldots\sum_{c _l\in \mathcal{S}_l \cap \mathcal{N}_{U,l}} \sum_{i\in \mathcal{S}_{l,c_l}} \rho_{l,c_l,i}n_{l+1,i}^{(t,k)}\right)}_\text{$n_{DP}^t$}
    \end{align}
    On the other hand, the $\beta$-smoothness of the global function $F$ implies
    \begin{equation}
        F(w^{(t+1)}) \leq F(w^{(t)}) + \nabla F(w)^\top (w^{(t+1)} - w^{(t)}) + \frac{\beta}{2}\|w^{(t+1)} - w^{(t)}\|^2.
    \end{equation}
    By injecting this inequality into \eqref{global_w_iter}, taking the expectations on both sides of the inequality, and use the fact that $\mathbb{E}[n_{DP}^t] = 0$ yields:
    \begin{align}
    \label{eq:A_main_iter}
        &\mathbb{E}\left[ F(w^{(t+1)}) -  F(w^{(t)})\right] \leq \underbrace{- \eta^t \sum_{k=1}^{K^t} \mathbb{E}\left\langle \nabla F(w^{(t)}),  \sum_{c_1 = 1}^{N_1} \rho_{c_1} \sum_{c_2\in \mathcal{S}_{1,c_1}} \rho_{1,c_1,c_2}\ldots\sum_{j\in\mathcal{S}_{L-1,c_{L-1}}} \rho_{L-1, c_{L-1},j} \nabla F_j(w_j^{(t,k)})\right\rangle}_\text{(a)} \notag\\
        &+ \underbrace{\frac{(\eta^t)^2 \beta}{2} \mathbb{E}\left[\left\|\sum_{k=1}^{K^t} \sum_{c_1 = 1}^{N_1} \rho_{c_1} \sum_{c_2\in \mathcal{S}_{1,c_1}} \rho_{1,c_1,c_2}\ldots\sum_{j\in\mathcal{S}_{L-1,c_{L-1}}} \rho_{L-1, c_{L-1},j} \nabla F_j(w_j^{(t,k)})\right\|^2\right]}_\text{(b)} + \underbrace{\frac{\beta}{2}\mathbb{E}\left[\left\|n_{DP}^t\right\|^2\right]}_\text{(c)} + \frac{(\eta^tK^t)^2\beta \sigma^2}{2} 
    \end{align}
    To bound (a), we apply Lemma~\ref{lem:inner} (see Appendix \ref{app:lemmas}) to get
    \begin{align}
    \label{eq:A_bound_on_innerprod}
       &- \eta^t \sum_{k=1}^{K^t} \mathbb{E}\left\langle \nabla F(w^{(t)}),  \sum_{c_1 = 1}^{N_1} \rho_{c_1} \sum_{c_2\in \mathcal{S}_{1,c_1}} \rho_{1,c_1,c_2}\ldots\sum_{j\in\mathcal{S}_{L-1,c_{L-1}}} \rho_{L-1, c_{L-1},j} \nabla F_j(w_j^{(t,k)})\right\rangle
        \leq
        -\frac{\eta^t K^t}{2}\left\|\nabla F(w^{(t)})\right\|^2 \notag\\
        &-\frac{\eta^t}{2} \sum_{k=1}^{K^t}\left\|\sum_{c_1 = 1}^{N_1} \rho_{c_1} \sum_{c_2\in \mathcal{S}_{1,c_1}} \rho_{1,c_1,c_2}\ldots\sum_{j\in\mathcal{S}_{L-1,c_{L-1}}} \rho_{L-1, c_{L-1},j} \nabla F_j(w_j^{(t,k)})\right\|^2 + (\eta^t)^3(K^t)^3\beta^2G^2\notag\\
        &+\frac{2(\eta^t)^2 L\beta^2 M (K^t)^5 T q^2 G^2\log(1/\delta)}{\epsilon^2}\sum_{l=2}^{L}(1 - p_{l-1}^{\min})^2 \bigg(p_{l}^{\max}\frac{\alpha_l^2}{\prod_{l' = l}^{L-1} s_{l'}^2} \\&+ \sum_{m = l}^{L-1} \prod_{l' = l}^{m-1}(1 - p_{l'}^{\min})p_{m}^{\max}\frac{\alpha_m^2}{\prod_{l' = l}^{ m-1} s_{l'}\prod_{l'' = m}^{ L-1} s_{l''}^2} + \prod_{l' = l}^{ L-1}(1 - p_{l'}^{\min})\frac{\alpha_L^2}{\prod_{l' = l}^{ L-1} s_{l'}}\bigg)
    \end{align}
    To bound (b), we use Assumption~\ref{assump:genLoss} and get
    \begin{align}
    \label{eq:A_bound_on_grad}
        &\frac{(\eta^t)^2 \beta}{2} \mathbb{E}\left[\left\|\sum_{k=1}^{K^t} \sum_{c_1 = 1}^{N_1} \rho_{c_1} \sum_{c_2\in \mathcal{S}_{1,c_1}} \rho_{1,c_1,c_2}\ldots\sum_{j\in\mathcal{S}_{L-1,c_{L-1}}} \rho_{L-1, c_{L-1},j} \nabla F_j(w_j^{(t,k)})\right\|^2\right] \\
        &\leq \frac{(\eta^t)^2 \beta}{2} K^t\sum_{k=1}^{K^t} \sum_{c_1 = 1}^{N_1} \rho_{c_1} \sum_{c_2\in \mathcal{S}_{1,c_1}} \rho_{1,c_1,c_2}\ldots\sum_{j\in\mathcal{S}_{L-1,c_{L-1}}} \rho_{L-1, c_{L-1},j}\mathbb{E}\left[\left\| \nabla F_j(w_j^{(t,k)})\right\|^2\right]\\
        &\leq \frac{(\eta^tK^t)^2 \beta G^2}{2}
    \end{align}
    To bound (c), we apply Lemma~\ref{lem:bd_on_DP_noise} and get
    \begin{align}
    \label{eq:A_bound_on_DP}
        &\frac{\beta}{2}\mathbb{E}\left[\left\|n_{DP}^t\right\|^2\right] \leq \frac{\beta}{2}\mathbb{E}\left[\left\| \sum_{c=1}^{N_1}\rho_c n_{1,c}^{(t,K^t+1)} + \sum_{l=1}^{L-1} (1 - p_{l}^{\min})\sum_{\substack{k=1,\\ K_l^t | k }}^{K^t} \left(\sum_{c_1 = 1}^{N_1} \rho_{c_1} \sum_{c_2\in \mathcal{S}_{1,c_1}} \rho_{1,c_1,c_2}\ldots\sum_{c _l\in \mathcal{S}_l \cap \mathcal{N}_{U,l}} \sum_{i\in \mathcal{S}_{l,c_l}} \rho_{l,c_l,i}n_{l+1,i}^{(t,k)}\right)\right\|^2\right]\\
        & \leq \frac{\beta L}{2}\mathbb{E}\left[\left\|\sum_{c=1}^{N_1}\rho_c n_{1,c}^{(t,K^t+1)}\right\|^2\right] + \frac{\beta L}{2}\sum_{l=1}^{L-1}(1 - p_{l}^{\min})^2\mathbb{E} \left[\left\| \sum_{\substack{k=1,\\ K_l^t | k }}^{K^t} \left(\sum_{c_1 = 1}^{N_1} \rho_{c_1} \sum_{c_2\in \mathcal{S}_{1,c_1}} \rho_{1,c_1,c_2}\ldots\sum_{c _l\in \mathcal{S}_l \cap \mathcal{N}_{U,l}} \sum_{i\in \mathcal{S}_{l,c_l}} \rho_{l,c_l,i}n_{l+1,i}^{(t,k)}\right)\right\|^2\right]\\
        &\leq \frac{ 2\eta^t L\beta  M (K^t)^4 T q^2G^2\log(1/\delta)}{\epsilon^t}\sum_{l=1}^{L}(1 - p_{l-1}^{\min})^2 \bigg(p_{l}^{\max}\frac{\alpha_l^2}{\prod_{l' = l}^{L-1} s_{l'}^2} \\&+ \sum_{m = l}^{L-1} \prod_{l' = l}^{m-1}(1 - p_{l'}^{\min})p_{m}^{\max}\frac{\alpha_m^2}{\prod_{l' = l}^{ m-1} s_{l'}\prod_{l'' = m}^{ L-1} s_{l''}^2} + \prod_{l' = l}^{ L-1}(1 - p_{l'}^{\min})\frac{\alpha_L^2}{\prod_{l' = l}^{ L-1} s_{l'}}\bigg)
    \end{align}
    By substituting \eqref{eq:A_bound_on_innerprod}, \eqref{eq:A_bound_on_grad}, \eqref{eq:A_bound_on_DP} into \eqref{eq:A_main_iter}, and using Assumption~\ref{assump:genLoss} yields
    \begin{align}
        &\mathbb{E}\left[ F(w^{(t+1)}) -  F(w^{(t)})\right] \leq -\frac{\eta^t K^t}{2}\left\|\nabla F(w^{(t)})\right\|^2 + \frac{(\eta^2K^t)^2 \beta}{2} \left(G^2 + \sigma^2 + \eta^2K^t\beta G^2\right)\\
        &+\frac{2(\eta^t)^2 L\beta^2 M (K^t)^5 T q^2 G^2\log(1/\delta)}{\epsilon^2}\sum_{l=2}^{L}(1 - p_{l-1}^{\min})^2 \bigg(p_{l}^{\max}\frac{\alpha_l^2}{\prod_{l' = l}^{L-1} s_{l'}^2} + \sum_{m = l}^{L-1} \prod_{l' = l}^{m-1}(1 - p_{l'}^{\min})p_{m}^{\max}\frac{\alpha_m^2}{\prod_{l' = l}^{ m-1} s_{l'}\prod_{l'' = m}^{ L-1} s_{l''}^2} \\&+ \prod_{l' = l}^{ L-1}(1 - p_{l'}^{\min})\frac{\alpha_L^2}{\prod_{l' = l}^{ L-1} s_{l'}}\bigg)\\
        &+\frac{ 2\eta^t L\beta  M (K^t)^4 T q^2G^2\log(1/\delta)}{\epsilon^t}\sum_{l=1}^{L}(1 - p_{l-1}^{\min})^2 \bigg(p_{l}^{\max}\frac{\alpha_l^2}{\prod_{l' = l}^{L-1} s_{l'}^2} + \sum_{m = l}^{L-1} \prod_{l' = l}^{m-1}(1 - p_{l'}^{\min})p_{m}^{\max}\frac{\alpha_m^2}{\prod_{l' = l}^{ m-1} s_{l'}\prod_{l'' = m}^{ L-1} s_{l''}^2} \\&+ \prod_{l' = l}^{ L-1}(1 - p_{l'}^{\min})\frac{\alpha_L^2}{\prod_{l' = l}^{ L-1} s_{l'}}\bigg)
    \end{align}
    Using the fact that $\eta^t \leq \frac{1}{\max \{T, K^t\}\beta}$ gives us
    \begin{align}
        &\frac{\eta^t K^t}{2}\left\|\nabla F(w^{(t)})\right\|^2 \leq \mathbb{E}\left[ F(w^{(t)}) -  F(w^{(t+1)})\right] + \frac{(\eta^2K^t)^2 \beta}{2} \left(G^2 + \sigma^2 +  \frac{G^2}{K^t\beta}\right)\\
        &+\frac{4\eta^t L\beta  M (K^t)^4 T q^2\log(1/\delta)}{\epsilon^2}\sum_{l=1}^{L}(1 - p_{l-1}^{\min})^2 \bigg(p_{l}^{\max}\frac{\alpha_l^2}{\prod_{l' = l}^{L-1} s_{l'}^2} + \sum_{m = l}^{L-1} \prod_{l' = l}^{m-1}(1 - p_{l'}^{\min})p_{m}^{\max}\frac{\alpha_m^2}{\prod_{l' = l}^{ m-1} s_{l'}\prod_{l'' = m}^{ L-1} s_{l''}^2} \\&+ \prod_{l' = l}^{ L-1}(1 - p_{l'}^{\min})\frac{\alpha_L^2}{\prod_{l' = l}^{ L-1} s_{l'}}\bigg)
    \end{align}
    Dividing both hand sides by $\frac{\eta^tK^t}{2}$ and plug in $K^{\max} = \max_t K^t$, averaging across global aggregation yields
    \begin{align}
        &\frac{1}{T}\sum_{t=1}^T \left\|\nabla F(w^{(t)})\right\|^2 \leq \frac{2\beta}{\sqrt{T+1}} \mathbb{E}\left[ F(w^{(1)}) -  F(w^{(T+1)})\right] + \frac{K^{\max}\left(G^2\left(1 + \frac{1}{\beta}\right) + \sigma^2\right)}{T}\\
        &+ \frac{8 L M (K^{\max})^4 q^2\log(1/\delta)}{\epsilon^2}\sum_{l=1}^{L}(1 - p_{l-1}^{\min})^2 \bigg(p_{l}^{\max}\frac{\alpha_l^2}{\prod_{l' = l}^{L-1} s_{l'}^2} + \sum_{m = l}^{L-1} \prod_{l' = l}^{m-1}(1 - p_{l'}^{\min})p_{m}^{\max}\frac{\alpha_m^2}{\prod_{l' = l}^{ m-1} s_{l'}\prod_{l'' = m}^{ L-1} s_{l''}^2} \\&+ \prod_{l' = l}^{ L-1}(1 - p_{l'}^{\min})\frac{\alpha_L^2}{\prod_{l' = l}^{ L-1} s_{l'}}\bigg)
    \end{align}
    
\end{proof}
\begin{corollary} \label{cor:noncvx}
    Under Assumptions~\ref{assump:SGD_noise} and \ref{assump:genLoss}, if $\eta^t=\frac{\gamma}{\sqrt{t+1}}$ with $\gamma\leq\min\{\frac{1}{K^{\max}},\frac{1}{T}\}/\beta$, and let $m$ be the layer where all edge servers below it are all secure servers ($m$ is the lowest layer to have insecure servers), i.e.
    \begin{align}
    \label{eq:cor_condition}
        p_{l'}^{\max} = p_{l'}^{\min} = 1, \quad l' \in [m+1 , L],
    \end{align}
    then the cumulative average of global gradient satisfies
    \begin{align} \label{eq:cor_noncvx}
\small
        &\frac{1}{T}\sum_{t=1}^T \left\|\nabla F(w^{(t)})\right\|^2 \leq \frac{2\beta}{\sqrt{T+1}} \mathbb{E}\left[ F(w^{(1)}) -  F(w^{(T+1)})\right] \notag\\
        &+ \frac{K^{\max}\left(G^2\left(1 + \frac{1}{\beta}\right) + \sigma^2\right)}{T}\notag\\
        &+ \frac{8 L M (K^{\max})^4 q^2\log(1/\delta)}{\epsilon^2}\sum_{l=1}^{m}  \frac{(1 - p_{l-1}^{\min})^2p_l^{\max}\alpha_l^2}{\prod_{l'=l}^{L-1}s_{l'}^2}\notag\\
        &+ \frac{8 L M (K^{\max})^4 q^2\log(1/\delta)}{\epsilon^2}\sum_{l=1}^{m} \frac{(1 - p_{l-1}^{\min})^2(1 - p_l^{\min})(\alpha_l')^2}{\prod_{l'=l}^{m}s_{l'}\prod_{l''=m+1}^{L-1}s_{l''}^2},
\end{align}
\end{corollary}
\begin{proof}
    Starting from the result from Theorem~\ref{thm:noncvx}, we can inject the condition from \eqref{eq:cor_condition}:
    \begin{align}
        &\sum_{l=1}^{L}(1 - p_{l-1}^{\min})^2 \bigg(p_{l}^{\max}\frac{\alpha_l^2}{\prod_{l' = l}^{L-1} s_{l'}^2} + \sum_{m' = l}^{L-1} \prod_{l' = l}^{m'-1}(1 - p_{l'}^{\min})p_{m'}^{\max}\frac{\alpha_{m'}^2}{\prod_{l' = l}^{ m'-1} s_{l'}\prod_{l'' = m'}^{ L-1} s_{l''}^2} + \prod_{l' = l}^{ L-1}(1 - p_{l'}^{\min})\frac{\alpha_L^2}{\prod_{l' = l}^{ L-1} s_{l'}}\bigg)\notag\\
        & = \sum_{l=1}^{m}(1 - p_{l-1}^{\min})^2 \bigg(p_{l}^{\max}\frac{\alpha_l^2}{\prod_{l' = l}^{m} s_{l'}^2} + \sum_{m' = l}^{m} \prod_{l' = l}^{m'}(1 - p_{l'}^{\min})p_{m'+1}^{\max}\frac{\alpha_{m'}^2}{\prod_{l' = l}^{ m'} s_{l'}\prod_{l'' = m'+1}^{ L-1} s_{l''}^2}\bigg)\notag\\
        &\leq \sum_{l=1}^{m}(1 - p_{l-1}^{\min})^2 \bigg(p_{l}^{\max}\frac{\alpha_l^2}{\prod_{l' = l}^{m} s_{l'}^2} + \sum_{m' = l}^{m} \prod_{l' = l}^{m'}(1 - p_{l'}^{\min})p_{m'+1}^{\max}\frac{\alpha_m^2}{\prod_{l' = l}^{ m} s_{l'}\prod_{l'' = m+1}^{ L-1} s_{l''}^2}\bigg) \notag\\
        &\leq \sum_{l=1}^{m}(1 - p_{l-1}^{\min})^2 \bigg(p_{l}^{\max}\frac{\alpha_l^2}{\prod_{l' = l}^{m} s_{l'}^2} + (1-p_{l}^{\min})\frac{(\alpha_l')^2}{\prod_{l' = l}^{ m} s_{l'}\prod_{l'' = m+1}^{ L-1} s_{l''}^2}\bigg)
    \end{align}
    Plugging this inequality back to Theorem~\ref{thm:noncvx} yields the result.
\end{proof}

\pagebreak

\subsection{Lemmas and Auxiliary Results}\label{app:lemmas}
To improve the tractability of the proofs, we provide a set of lemmas in the following, which will be used to obtain the main results of the paper.
\begin{lemma}\label{lem:Delta}
    Under Assumption~\ref{assump:genLoss}, the $L_2$-norm sensitivity of the exchanged gradients during local aggregations can be obtained as follows:
\begin{itemize}
    \item For any $l \in [1, L-1]$, and any given node $c_l \in \mathcal{S}_l$, the sensitivity is bounded by
    \begin{align}\label{eq:locAgg_egd_sens1}
        \Delta_{l, c_l} = \max_{\mathcal{D}, \mathcal{D}'}\left\|\eta^t\sum_{k=1}^{K^t}\sum_{c_{l+1}\in \mathcal{S}_{l,c_l}}\rho_{l, c_l, c_{l+1}} \ldots \sum_{j\in \mathcal{S}_{L-1, c_{L-1}}} \rho_{L-1, c_{L-1}, j} \left(g_j^{(t,k)}(\mathcal{D}) - g_j^{(t,k)}(\mathcal{D}')\right)\right\| \leq \frac{2\eta^t K^t G}{\prod_{l'=l}^{L-1} s_{l'}}, \quad \forall l \in [1, L-1]
    \end{align}
    \item For any given node $j \in \mathcal{S}_L$, the sensitivity is bounded by
    \begin{align}\label{eq:locAgg_egd_sens2}
        \Delta_{L,j} = \max_{\mathcal{D}, \mathcal{D}'}\left\|\eta^t\sum_{k=1}^{K^t}\left(g_j^{(t,k)}(\mathcal{D}) - g_j^{(t,k)}(\mathcal{D}')\right)\right\| \leq 2\eta^t K^t G
    \end{align}
\end{itemize}

\end{lemma}

    \begin{proof}
    (Case $l \in [1, L-1]$) From the condition that $\mathcal{D}$ and $\mathcal{D}'$ are adjacent dataset, which means the data samples only differs in one entry. As a result, all the summations within the norm, other than the sum for $k$, can be removed directly after upper bounding $\rho_{l, c_l, c_{l+1}}$. From previous definitions, we have:
    \begin{equation}
        \rho_{l, c_l, c_{l+1}} \leq \frac{1}{s_l}
    \end{equation}
As as result, we can bound $\Delta_{l, c_l}$ as
\begin{align}
    \Delta_{l, c_l} &\leq \Delta_{l}\\
    &\leq \frac{\eta^t}{\prod_{l'=l}^{L-1} s_{l'}} \sum_{k=1}^{K^t}\max_{\mathcal{D}, \mathcal{D}'}\left\|g_j^{(t,k)}(\mathcal{D}) - g_j^{(t,k)}(\mathcal{D}')\right\|\\
    &\leq \frac{\eta^t}{\prod_{l'=l}^{L-1} s_{l'}} \sum_{k=1}^{K^t}\max_{\mathcal{D}, \mathcal{D}'}\left\|g_j^{(t,k)}(\mathcal{D}) \right\|+\left\| g_j^{(t,k)}(\mathcal{D}')\right\|\\
    &\leq \frac{2\eta^t K^t G}{\prod_{l'=l}^{L-1} s_{l'}}
\end{align}
giving us the result in~\eqref{eq:locAgg_egd_sens1}.
    
(Case $l = L$) Similarly, for the leaf nodes, we can bound $\Delta_{L, j}$ as
        \begin{align}
    \Delta_{L, j} &\leq \Delta_{L}\\
    &\leq \eta^t \sum_{k=1}^{K^t}\max_{\mathcal{D}, \mathcal{D}'}\left\|g_j^{(t,k)}(\mathcal{D}) - g_j^{(t,k)}(\mathcal{D}')\right\|\\
    &\leq \eta^t \sum_{k=1}^{K^t}\max_{\mathcal{D}, \mathcal{D}'}\left\|g_j^{(t,k)}(\mathcal{D}) \right\|+\left\| g_j^{(t,k)}(\mathcal{D}')\right\|\\
    &\leq 2\eta^t K^t G
\end{align} giving us the result in~\eqref{eq:locAgg_egd_sens2}.
    \end{proof}

\begin{lemma}\label{lem:bd_on_DP_noise}
For any given $k \in [1, K^t]$, and any layer $l \in [1, L-1]$, the DP noise that is aggregated at local iteration $k$ towards the $l^{\textrm{th}}$ layer can be bounded by:
\begin{align}
   &\mathbb{E}\left\|\sum_{i\in \mathcal{S}_{l,d_{l,i}}} \rho_{l,d_{l,i},i}n_{l+1,i}^{(t,k)} \right\|^2 \notag \\
   &\leq \frac{2\eta^t M (K^t)^2 T q^2G\log(1/\delta)}{\epsilon^2} \left(p_{l+1}^{\max}\frac{\alpha_l^2}{\prod_{l' = l+1}^{L-1} s_{l'}^2} + \sum_{m = l+1}^{L-1} \prod_{l' = l+1}^{m-1}(1 - p_{l'}^{\min})p_{m}^{\max}\frac{\alpha_m^2}{\prod_{l' = l+1}^{ m-1} s_{l'}\prod_{l'' = m}^{ L-1} s_{l''}^2} + \prod_{l' = l+1}^{ L-1}(1 - p_{l'}^{\min})\frac{\alpha_L^2}{\prod_{l' = l+1}^{ L-1} s_{l'}}\right)
\end{align}
\end{lemma}
\begin{proof}
    We first move the norm into the sum
    \begin{align}
    \label{eq:sum_of_dp_noise}
        \mathbb{E}\left\|\sum_{i\in \mathcal{S}_{l,d_{l,i}}} \rho_{l,d_{i,j},i}n_{l+1,i}^{(t,k)} \right\|^2 \leq \sum_{i\in \mathcal{S}_{l,d_{l,i}}} \rho_{l,d_{i,j},i} \mathbb{E}\left\|n_{l+1,i}^{(t,k)}\right\|^2 
    \end{align}
    Since each $\mathbb{E}\left\|n_{l+1,i}^{(t,k)}\right\|^2$ is the variance of a guassian noise that's a linear combination of multiple noises, by applying the total law of expectation:
    \begin{align}
        &\mathbb{E}\left\|n_{l+1,i}^{(t,k)}\right\|^2  \\
        &=  \mathbb E\Big[\Big\Vert{ n}_{l+1,i}^{(t,k)}\Big\Vert^2\Big|{ n}_{l+1,i}^{(t,k)}\sim \mathcal{N}(0, ({\sigma}_{l+1,i}^{(t,k)})^2)\Big]
    \\
    &= M\left({\sigma}_{l+1,i}^{(t,k)}\right)^2\\
    &\leq p_{l+1}^{\max} M\sigma^2 (\Delta_{l+1}) + (1 - p_{l+1}^{\min})p_{l+2}^{\max} \frac{M\sigma^2 (\Delta_{l+2})}{s_{l+1}} + (1 - p_{l+1}^{\min})(1 - p_{l+2}^{\min})p_{l+3}^{\max} \frac{M\sigma^2 (\Delta_{l+3})}{s_{l+1}s_{l+2}} + \ldots \prod_{l' = l+1}^{ L-1}(1-p_{l'}^{\min}) \frac{M\sigma^2(\Delta_{L})}{\prod_{l' = l+1}^{ L-1} s_{l'}}\\
    &\leq \frac{2\eta^t M (K^t)^2 T q^2G\log(1/\delta)}{\epsilon^2} \left(p_{l+1}^{\max}\frac{\alpha_l^2}{\prod_{l' = l+1}^{L-1} s_{l'}^2} + \sum_{m = l+1}^{L-1} \prod_{l' = l+1}^{m-1}(1 - p_{l'}^{\min})p_{m}^{\max}\frac{\alpha_m^2}{\prod_{l' = l+1}^{ m-1} s_{l'}\prod_{l'' = m}^{ L-1} s_{l''}^2} + \prod_{l' = l+1}^{ L-1}(1 - p_{l'}^{\min})\frac{\alpha_L^2}{\prod_{l' = l+1}^{ L-1} s_{l'}}\right)
    \end{align}
    We can plug this inequality back to \eqref{eq:sum_of_dp_noise} and obtain the results.
\end{proof}

\begin{lemma} \label{lem:inner}
Under Assumption~\ref{assump:genLoss}, we have
    \begin{align*}
       &- \eta^t \sum_{k=1}^{K^t} \mathbb{E}\left\langle \nabla F(w^{(t)}),  \sum_{c_1 = 1}^{N_1} \rho_{c_1} \sum_{c_2\in \mathcal{S}_{1,c_1}} \rho_{1,c_1,c_2}\ldots\sum_{j\in\mathcal{S}_{L-1,c_{L-1}}} \rho_{L-1, c_{L-1},j} \nabla F_j(w_j^{(t,k)})\right\rangle
        \leq
        -\frac{\eta^t K^t}{2}\left\|\nabla F(w^{(t)})\right\|^2 \\
        &-\frac{\eta^t}{2} \sum_{k=1}^{K^t}\left\|\sum_{c_1 = 1}^{N_1} \rho_{c_1} \sum_{c_2\in \mathcal{S}_{1,c_1}} \rho_{1,c_1,c_2}\ldots\sum_{j\in\mathcal{S}_{L-1,c_{L-1}}} \rho_{L-1, c_{L-1},j} \nabla F_j(w_j^{(t,k)})\right\|^2 + (\eta^t)^3(K^t)^3\beta^2G^2\\
        &+\frac{(\eta^t)^2 L\beta^2 M (K^t)^5 T q^2 G^2\log(1/\delta)}{\epsilon^2}\sum_{l=2}^{L}(1 - p_{l-1}^{\min})^2 \bigg(p_{l}^{\max}\frac{\alpha_l^2}{\prod_{l' = l}^{L-1} s_{l'}^2} \\
        &+ \sum_{m = l}^{L-1} \prod_{l' = l}^{m-1}(1 - p_{l'}^{\min})p_{m}^{\max}\frac{\alpha_m^2}{\prod_{l' = l}^{ m-1} s_{l'}\prod_{l'' = m}^{ L-1} s_{l''}^2} + \prod_{l' = l}^{ L-1}(1 - p_{l'}^{\min})\frac{\alpha_L^2}{\prod_{l' = l}^{ L-1} s_{l'}}\bigg)
    \end{align*}
\end{lemma}

\begin{proof}  Since $-2 \mathbf a^\top \mathbf b = -\Vert \mathbf a\Vert^2-\Vert \mathbf b \Vert^2 + \Vert \mathbf a- \mathbf b\Vert^2$ holds for any two vectors $\mathbf a$ and $\mathbf b$ with real elements, we have

\begin{align}
    &- \eta^t \sum_{k=1}^{K^t} \mathbb{E}\left\langle \nabla F(w^{(t)}),  \sum_{c_1 = 1}^{N_1} \rho_{c_1} \sum_{c_2\in \mathcal{S}_{1,c_1}} \rho_{1,c_1,c_2}\ldots\sum_{j\in\mathcal{S}_{L-1,c_{L-1}}} \rho_{L-1, c_{L-1},j} \nabla F_j(w_j^{(t,k)})\right\rangle\\
    &= \frac{\eta^t}{2} \sum_{k=1}^{K^t} \Bigg[ -\left\|\nabla F(w^{(t)})\right\|^2 - \left\|\sum_{c_1 = 1}^{N_1} \rho_{c_1} \sum_{c_2\in \mathcal{S}_{1,c_1}} \rho_{1,c_1,c_2}\ldots\sum_{j\in\mathcal{S}_{L-1,c_{L-1}}} \rho_{L-1, c_{L-1},j} \nabla F_j(w_j^{(t,k)})\right\|^2 \\
    &+ \underbrace{\left\|\nabla F(w^{(t)})-\sum_{c_1 = 1}^{N_1} \rho_{c_1} \sum_{c_2\in \mathcal{S}_{1,c_1}} \rho_{1,c_1,c_2}\ldots\sum_{j\in\mathcal{S}_{L-1,c_{L-1}}} \rho_{L-1, c_{L-1},j} \nabla F_j(w_j^{(t,k)})\right\|^2}_\text{(a)}\Bigg]
\end{align}
Applying Assumption~\ref{assump:genLoss}, we can further bound (a) above as:
\begin{align}
    &\left\|\nabla F(w^{(t)})-\sum_{c_1 = 1}^{N_1} \rho_{c_1} \sum_{c_2\in \mathcal{S}_{1,c_1}} \rho_{1,c_1,c_2}\ldots\sum_{j\in\mathcal{S}_{L-1,c_{L-1}}} \rho_{L-1, c_{L-1},j} \nabla F_j(w_j^{(t,k)})\right\|^2\\
    \leq& \sum_{c_1 = 1}^{N_1} \rho_{c_1} \sum_{c_2\in \mathcal{S}_{1,c_1}} \rho_{1,c_1,c_2}\ldots\sum_{j\in\mathcal{S}_{L-1,c_{L-1}}} \rho_{L-1, c_{L-1},j} \left\| \nabla F(w^{(t)}) - \nabla F(w_j^{(t,k)}) \right\|^2\\
    \leq& \beta^2\sum_{c_1 = 1}^{N_1} \rho_{c_1} \sum_{c_2\in \mathcal{S}_{1,c_1}} \rho_{1,c_1,c_2}\ldots\sum_{j\in\mathcal{S}_{L-1,c_{L-1}}} \rho_{L-1, c_{L-1},j} \left\| w^{(t)} - w_j^{(t,k)} \right\|^2
\end{align}
Now for each layer $l$, we use the notation $d_{l,j}$ to denote the parent node of a leaf node $j \in \mathcal{S}_L$:
\begin{align}
&\left\|\nabla F(w^{(t)})-\sum_{c_1 = 1}^{N_1} \rho_{c_1} \sum_{c_2\in \mathcal{S}_{1,c_1}} \rho_{1,c_1,c_2}\ldots\sum_{j\in\mathcal{S}_{L-1,c_{L-1}}} \rho_{L-1, c_{L-1},j} \nabla F_j(w_j^{(t,k)})\right\|^2\\
    \leq& \beta^2\sum_{c_1 = 1}^{N_1} \rho_{c_1} \sum_{c_2\in \mathcal{S}_{1,c_1}} \rho_{1,c_1,c_2}\ldots\sum_{j\in\mathcal{S}_{L-1,c_{L-1}}} \rho_{L-1, c_{L-1},j} \mathbb{E}\left\| -\eta^t\sum_{k'=1}^{k-1} \widehat{g}_j^{(t,k')} + \sum_{l=1}^{L-1} (1 - p_l^{\min}) \sum_{\substack{k'\in \mathcal{K}_l^t }}  \sum_{i\in \mathcal{S}_{l,d_{l,j}}} \rho_{l,c,i}n_{l+1,i}^{(t,k')} \right\|^2\\
    \leq& 2\beta^2(\eta^t)^2  \sum_{c_1 = 1}^{N_1} \rho_{c_1} \sum_{c_2\in \mathcal{S}_{1,c_1}} \rho_{1,c_1,c_2}\ldots\sum_{j\in\mathcal{S}_{L-1,c_{L-1}}} \rho_{L-1, c_{L-1},j} \mathbb{E}\left\| \sum_{k'=1}^{k-1} \widehat{g}_j^{(t,k')} \right\|^2\\
    &+ 2L\beta^2 \sum_{c_1 = 1}^{N_1} \rho_{c_1} \sum_{c_2\in \mathcal{S}_{1,c_1}} \rho_{1,c_1,c_2}\ldots\sum_{j\in\mathcal{S}_{L-1,c_{L-1}}} \rho_{L-1, c_{L-1},j} \mathbb{E}\left\|\sum_{l=1}^{L-1} (1 - p_l^{\min})\sum_{\substack{k'\in \mathcal{K}_l^t }}  \sum_{i\in \mathcal{S}_{l,d_{l,j}}} \rho_{l,c,i}n_{l+1,i}^{(t,k')} \right\|^2
\end{align}

We can then use the Lemma~\ref{lem:bd_on_DP_noise}, and Assumption~\ref{assump:genLoss} to show that
\begin{align}
    &\left\|\nabla F(w^{(t)})-\sum_{c_1 = 1}^{N_1} \rho_{c_1} \sum_{c_2\in \mathcal{S}_{1,c_1}} \rho_{1,c_1,c_2}\ldots\sum_{j\in\mathcal{S}_{L-1,c_{L-1}}} \rho_{L-1, c_{L-1},j} \nabla F_j(w_j^{(t,k)})\right\|^2\\
    \leq & 2\beta^2(K^t)^2 G^2 \\&+  \frac{4  \eta^t L\beta^2 M (K^t)^4 T q^2 G^2\log(1/\delta)}{\epsilon^2}\sum_{l=2}^{L} (1 - p_{l-1}^{\min})^2\bigg(p_{l}^{\max}\frac{\alpha_l^2}{\prod_{l' = l}^{L-1} s_{l'}^2} + \sum_{m = l}^{L-1} \prod_{l' = l}^{m-1}(1 - p_{l'}^{\min})p_{m}^{\max}\frac{\alpha_m^2}{\prod_{l' = l}^{ m-1} s_{l'}\prod_{l'' = m}^{ L-1} s_{l''}^2} \\
    &+ \prod_{l' = l}^{ L-1}(1 - p_{l'}^{\min})\frac{\alpha_L^2}{\prod_{l' = l}^{ L-1} s_{l'}}\bigg)
\end{align}

\end{proof}

\begin{fact}\label{fact:1} 
Consider $n$ random real-valued vectors $\mathbf x_1,\cdots,\mathbf x_n\in\mathbb R^m$, the following inequality holds: 
 \begin{equation}
     \sqrt{\mathbb E\left[\Big\Vert\sum\limits_{i=1}^{n} \mathbf x_i\Big\Vert^2\right]}\leq \sum\limits_{i=1}^{n} \sqrt{\mathbb E[\Vert\mathbf x_i\Vert^2]}.
 \end{equation}
\end{fact}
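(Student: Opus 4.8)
The plan is to recognize the quantity $\|\mathbf x\|_{L^2}\triangleq\sqrt{\mathbb E[\Vert\mathbf x\Vert^2]}$ as the $L^2(\mathbb P)$-norm of a vector-valued random variable, so that the claim is precisely the triangle inequality for this norm, extended to $n$ summands by induction. First I would establish the base case $n=2$, namely
\begin{align}\label{eq:fact1_base}
    \sqrt{\mathbb E[\Vert\mathbf x_1+\mathbf x_2\Vert^2]}\leq\sqrt{\mathbb E[\Vert\mathbf x_1\Vert^2]}+\sqrt{\mathbb E[\Vert\mathbf x_2\Vert^2]},
\end{align}
since the general statement follows from it by a routine grouping argument.

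To prove \eqref{eq:fact1_base}, I would expand the squared norm inside the expectation using linearity:
\begin{align}\label{eq:fact1_expand}
    \mathbb E[\Vert\mathbf x_1+\mathbf x_2\Vert^2]
    =\mathbb E[\Vert\mathbf x_1\Vert^2]+2\,\mathbb E[\mathbf x_1^\top\mathbf x_2]+\mathbb E[\Vert\mathbf x_2\Vert^2].
\end{align}
The crux is bounding the cross term $\mathbb E[\mathbf x_1^\top\mathbf x_2]$, and this is where two successive applications of Cauchy--Schwarz do the work. Applying the pointwise (Euclidean) Cauchy--Schwarz inequality gives $\mathbf x_1^\top\mathbf x_2\leq\Vert\mathbf x_1\Vert\,\Vert\mathbf x_2\Vert$ for every outcome, and taking expectations then applying the Cauchy--Schwarz inequality for expectations yields $\mathbb E[\Vert\mathbf x_1\Vert\,\Vert\mathbf x_2\Vert]\leq\sqrt{\mathbb E[\Vert\mathbf x_1\Vert^2]}\,\sqrt{\mathbb E[\Vert\mathbf x_2\Vert^2]}$. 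Substituting this chain of bounds into \eqref{eq:fact1_expand} produces a perfect square,
\begin{align}\label{eq:fact1_square}
    \mathbb E[\Vert\mathbf x_1+\mathbf x_2\Vert^2]
    \leq\Big(\sqrt{\mathbb E[\Vert\mathbf x_1\Vert^2]}+\sqrt{\mathbb E[\Vert\mathbf x_2\Vert^2]}\Big)^2,
\end{align}
and taking the (monotone) square root of both sides establishes \eqref{eq:fact1_base}.

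For the inductive step I would assume the bound holds for $n-1$ summands and write $\sum_{i=1}^{n}\mathbf x_i=\big(\sum_{i=1}^{n-1}\mathbf x_i\big)+\mathbf x_n$. Applying \eqref{eq:fact1_base} to these two terms gives $\sqrt{\mathbb E[\Vert\sum_{i=1}^{n}\mathbf x_i\Vert^2]}\leq\sqrt{\mathbb E[\Vert\sum_{i=1}^{n-1}\mathbf x_i\Vert^2]}+\sqrt{\mathbb E[\Vert\mathbf x_n\Vert^2]}$, and the induction hypothesis bounds the first term by $\sum_{i=1}^{n-1}\sqrt{\mathbb E[\Vert\mathbf x_i\Vert^2]}$, completing the argument. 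I do not anticipate a genuine obstacle here; the only subtlety worth stating explicitly is the dual use of Cauchy--Schwarz (pointwise in $\mathbb R^m$ to handle the inner product, then in $L^2(\mathbb P)$ to handle the expectation of the product of norms), which is the single step that makes the cross term collapse into the square completing \eqref{eq:fact1_square}.
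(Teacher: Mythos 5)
Your proof is correct and rests on the same crux as the paper's: bounding the cross term $\mathbb E[\mathbf x_i^\top\mathbf x_j]$ by $\sqrt{\mathbb E[\Vert\mathbf x_i\Vert^2]}\,\sqrt{\mathbb E[\Vert\mathbf x_j\Vert^2]}$ via Cauchy--Schwarz applied pointwise in $\mathbb R^m$ and then in expectation (the paper invokes H\"older's inequality for the latter step). The only difference is organizational --- the paper expands $\Vert\sum_i\mathbf x_i\Vert^2$ into the full double sum $\sum_{i,j}\mathbb E[\mathbf x_i^\top\mathbf x_j]$ and collapses it to $\big(\sum_i\sqrt{\mathbb E[\Vert\mathbf x_i\Vert^2]}\big)^2$ in one shot, whereas you prove the $n=2$ case and induct; both are standard derivations of Minkowski's inequality in $L^2(\mathbb P)$.
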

\begin{proof} Note that
    \begin{align}
        &\sqrt{\mathbb E\left[\Big\Vert\sum\limits_{i=1}^{n}\mathbf x_i\Big\Vert^2\right]}
        =
        \sqrt{\sum\limits_{i,j=1}^{n}\mathbb E [\mathbf x_i^\top\mathbf x_j]}
        \overset{(a)}{\leq}
\sum\limits_{i,j=1}^{n}\sqrt{\mathbb E [\Vert\mathbf x_i\Vert^2] \mathbb E[\Vert\mathbf x_j\Vert^2]]}
        =
        \sum\limits_{i=1}^{n} \sqrt{\mathbb E[\Vert\mathbf x_i\Vert^2]},
    \end{align}
    where $(a)$ follows from Holder's inequality, $\mathbb E[|XY|] \leq \sqrt{\mathbb E[|X|^2]\mathbb E[ |Y|^2]}$.
\end{proof}

\begin{fact} \label{fact:2} Let $a\geq0 $, $b\geq 0$ and $n\geq 1$ (or $n\geq0$ if $n$ integer). Then, it follows
$a^n - b^n \leq (a-b)na^{n-1}$.
\begin{proof}
Let $\phi(x)\triangleq a^n-(a+x)^n$. Since $\phi(x)$ is a concave function of
$x\geq -a$, it follows that $a^n-b^n=\phi(b-a)\leq \phi(0)+\phi^\prime(0)(b-a)
=(a-b)na^{n-1}$.
\end{proof}
\end{fact}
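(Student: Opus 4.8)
The plan is to deduce this elementary inequality from the convexity of the power function $g(x) = x^n$ on the nonnegative half-line. First I would record that for $n \geq 1$ the map $g(x) = x^n$ is convex on $[0,\infty)$: its second derivative $g''(x) = n(n-1)x^{n-2}$ is nonnegative for all $x \geq 0$ when $n \geq 1$, with the boundary case $n = 1$ being affine (hence trivially convex) and the integer case $n = 0$ reducing the claimed bound to $0 \leq 0$. Establishing convexity on the full closed range $n \geq 1$ (including non-integer exponents) is the one place where a line of justification is genuinely needed, and the $g''$ argument covers it uniformly since $n(n-1) \geq 0$ precisely when $n \geq 1$ (or $n \leq 0$).

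Next I would invoke the supporting-line (first-order) characterization of convexity: a differentiable convex function lies above each of its tangent lines, so for all $a, b \geq 0$ we have
\begin{align}
    g(b) \geq g(a) + g'(a)(b - a).
\end{align}
Substituting $g(x) = x^n$ and $g'(x) = n x^{n-1}$ gives
\begin{align}
    b^n \geq a^n + n a^{n-1}(b - a),
\end{align}
and rearranging this yields exactly
\begin{align}
    a^n - b^n \leq n a^{n-1}(a - b) = (a - b)\, n a^{n-1},
\end{align}
which is the desired bound. This mirrors the concavity argument already sketched in the statement (taking $\phi(x) = a^n - (a+x)^n$, which is concave, and evaluating its tangent at $x = 0$ at the point $x = b - a$), but I would phrase it through the convexity of $g$ directly, as that makes the single inequality do all the work.

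The attractive feature of this route — and the reason I would prefer it over a mean-value-theorem argument — is that the tangent-line inequality holds for every $b \geq 0$ simultaneously, so there is no need to split into the cases $b < a$ and $b > a$ and track the sign of $a - b$ when comparing intermediate values $\xi^{n-1}$ against $a^{n-1}$. Thus there is no real obstacle here; the only items demanding care are the convexity justification noted above and the well-definedness of the factor $a^{n-1}$ at $a = 0$, which is harmless for $n \geq 1$ (interpreting $a^0 = 1$ when $n = 1$, while $a^{n-1}$ vanishes continuously as $a \to 0$ when $n > 1$). With these remarks in place the three displayed lines constitute the complete argument.
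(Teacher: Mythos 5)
Your proposal is correct and is essentially the paper's own argument: the paper's concavity of $\phi(x)=a^n-(a+x)^n$ with its tangent at $x=0$ evaluated at $x=b-a$ is exactly the first-order convexity inequality $b^n \geq a^n + na^{n-1}(b-a)$ for $g(x)=x^n$ that you use, just phrased through the shifted function. Your added care about verifying $g''(x)=n(n-1)x^{n-2}\geq 0$ for all real $n\geq 1$ and about the boundary cases $a=0$ and integer $n=0$ is a welcome tightening of the paper's terser sketch, but it is the same proof.
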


\end{document}